\author[1]{Quentin Klopfenstein$^*$}
\author[2]{Quentin Bertrand$^*$}
\author[2]{Alexandre Gramfort}
\author[3]{Joseph Salmon}
\author[4]{Samuel Vaiter}
\affil[1]{Universit\'e de Bourgogne, Institut de Mathématiques de Bourgogne, Dijon, France}
\affil[2]{Universit\'e Paris-Saclay, Inria, CEA, Palaiseau, France}
\affil[3]{IMAG, Universit\'e de Montpellier, CNRS, Montpellier, France}
\affil[4]{CNRS and IMB, Universit\'e de Bourgogne, Dijon, France}
\affil[$^*$]{Equal contribution}
\title{Model identification and local linear convergence of coordinate descent}
\definecolor{linkcolor}{RGB}{83,83,182}
\definecolor{citecolor}{RGB}{128,0,128}
\begin{document}
\maketitle
\begin{abstract}
For composite nonsmooth optimization problems, Forward-Backward algorithm achieves model identification (\eg support identification for the Lasso) after a finite number of iterations, provided the objective function is regular enough.
Results concerning coordinate descent are scarcer and model identification has only been shown for specific estimators, the support-vector machine for instance.
In this work, we show that cyclic coordinate descent achieves model identification in finite time for a wide class of functions.
In addition, we prove explicit local linear convergence rates for coordinate descent.
Extensive experiments on various estimators and on real datasets demonstrate that these rates match well empirical results.
\end{abstract}

\setcounter{tocdepth}{2}
\section{Introduction}
%
\subsection{Coordinate descent}
Over the last two decades, coordinate descent (CD) algorithms have become a powerful tool to solve large scale optimization problems \citep{Friedman_Hastie_Hofling_Tibshirani07,Friedman_Hastie_Tibshirani10}.
Many applications coming from machine learning or compressed sensing have lead to optimization problems that can be solved efficiently via CD algorithms: the Lasso \citep{Tibshirani96,Chen_Donoho_Saunders98}, the elastic net \citep{Zou_Hastie05} or support-vector machine \citep{Boser_Guyon_Vapnik92}.
All the previously cited estimators are based on an optimization problem which can be written:
\begin{align}\label{general_opt}
    x^\star \in
    \argmin_{
        x \in \mathbb{R}^{p}}
        \{ \Phi(x) \eqdef f(x) + \underbrace{\sum_{j=1}^p g_j(x_j)}_{\eqdef g(x)
        }\} \enspace ,
\end{align}
with $f$ a convex smooth (\ie with a Lipschitz gradient) function and $g_j$ proper closed and convex functions.
In the past twenty years, the popularity of CD algorithms has greatly increased due to the well suited structure of the new optimization problems mentioned above (\ie separability of the nonsmooth term), as well as the possible parallelization of the algorithms \citep{Fercoq_Richtarik15}.

The key idea behind CD (\Cref{alg:bcd}) is to solve small and simple subproblems iteratively until convergence.
More formally, for a function $\Phi:\mathbb{R}^p \mapsto \mathbb{R}$, the idea is to minimize successively one dimensional functions $\Phi_{|x_j}: \mathbb{R} \mapsto \mathbb{R}$, updating only one coordinate at a time, while the others remain unchanged.
There exists many variants of CD algorithms, the main branching being:
\begin{itemize}
    \item \textbf{The index selection.}
    There are different ways to choose the index of the updated coordinate at each iteration.
    The main variants can be divided in three categories, \textbf{cyclic} CD \citep{Tseng_Yun09} when the indices are chosen in the set $[p]\eqdef\{1, \dots, p\}$ cyclically.
    \textbf{Random} CD \citep{Nesterov12}, where the indices are chosen following a given random distribution.
    Finally, \textbf{greedy} CD \citep{Nutini_Schmidt_Laradji_Friedlander_Koepke15}
    \textit{picks} an index, optimizing a given criterion: largest decrease of the objective function, or largest gradient norm (Gauss-Southwell rule), for instance.
    \item \textbf{The update rule.} There also exists several possible schemes for the coordinate update: exact minimization, coordinate gradient descent or prox-linear update (see \citealt[Sec. 2.2]{Shi_Tu_Xu_Yin16} for details).
\end{itemize}
\textbf{In this work, we will focus on \textbf{cyclic} CD with \textbf{prox-linear} update rule (\Cref{alg:bcd})}: a popular instance, \eg the one coded in popular packages such as \texttt{glmnet} \citep{Friedman_Hastie_Hofling_Tibshirani07} or \texttt{sklearn} \citep{Pedregosa_etal11}.

Among the methods of coordinate selection, \textbf{random} CD has been extensively studied, especially by \citet{Nesterov12} for the minimization of a smooth function $f$.
It was the first paper proving global non-asymptotic $1/k$ convergence rate in the case of a smooth and convex $f$.
This work was later extended to composite optimization $f + \sum_j g_j$ for nonsmooth separable functions \citep{Richtarik_Takac2014,Fercoq_Richtarik15}.
Refined convergence rates were also shown by
\citet{Shalev_Tewari2011,Shalev_Zhang2013}.
These convergence results have then been extended to coordinate descent with equality constraints \citep{Necoara_Patrascu2014} that induce non-separability
as found in the SVM dual problem in the presence of the bias term.
Different distributions have been considered for the index selection such as uniform distribution \citep{Fercoq_Richtarik15, Nesterov12, Shalev_Tewari2011, Shalev_Zhang2013}, importance sampling \citep{Leventhal_Lewis2010, Zhang2004} and arbitrary sampling \citep{Necoara_Patrascu2014, Qu_Richtarik2016,Qu_Richtarik2016b}.

On the opposite, theory on \textbf{cyclic} coordinate descent is more fuzzy,
the analysis in the cyclic case being more difficult.
First, \citet{Luo_Tseng1992,Tseng01,Tseng_Yun09,Razaviyayn_Hong_Luo2013} have shown convergence results for (block) CD algorithms for nonsmooth optimization problems (without rates\footnote{Note that some local rates are shown in \citet{Tseng_Yun09} but under some strong hypothesis.}).
Then, \citet{Beck_Tetruashvili13} showed $1/k$ convergence rates for Lipschitz convex functions and linear convergence rates in the strongly convex case.
\citet{Saha_Tewari13} proved $1/k$ convergence rates for composite optimization $f + \normin{.}_1$ under "isotonicity" condition.
\citet{Sun_Hong2015,Hong_Wang2017} have extended the latter results and showed $1/k$ convergence rates with improved constants for composite optimization $f + \sum_j g_j$.
\citet{Li_Zhao_Arora_Liu_Hong2017} have extended the work of \citet{Beck_Tetruashvili13} to the nonsmooth case and refined their convergence rates in the smooth case.
Finally, as far as we know, the work by \citet{Xu_Yin2017} is the first one tackling the problem of local linear convergence. They have proved local linear convergence under the very general Kurdyka-Lojasiewicz hypothesis, relaxing convexity assumptions.
Following the line of work by \citet{Liang_Fadili_Peyere14}, we use a more restrictive framework (see \Cref{sub:pb_statement}) that allows to achieve finer results: model identification as well as improved local convergence results.
%
\subsection{Model identification}
\label{sub:manifold_id}
%
%
Nonsmooth optimization problems coming from machine learning such as the Lasso or the support-vector machine (SVM) generally generate solutions lying onto a low-complexity model (see \Cref{def:gsupp} for details).
For the Lasso, for example, a solution $x^\star$ has typically only a few non-zeros coefficients: it lies on the model set
$T_{x^\star} = \{u\in\mathbb{R}^p: \text{supp}(u)\subseteq\text{supp}(x^\star)\}$,
where $\text{supp}(x)$ is the support of $x$, \ie the set of indices corresponding to the non-zero coefficients.
A question of interest in the literature is: does the algorithm achieve model identification after a finite number of iterations?
Formally, does it exist $K>0$ such that for all $k>K$, $x^{(k)} \in T_{x^\star}$? For the Lasso the question boils down to ``does it exist $K>0$ such that for all $k>K$,
$\supp(x^{(k)}) \subseteq \supp(x^\star)$''?
This finite time identification property is paramount for features selection \citep{Tibshirani96}, but also for potential acceleration methods \citep{Massias_Gramfort_Salmon18} of the CD algorithm, as well as model calibration \citep{Bertrand_Klopfenstein_Blondel_Vaiter_Gramfort_Salmon2020}.

Finite model identification was first proved in \citet{Bertsekas76} for the projected gradient method with non-negative constraints.
In this case, after a finite number of steps the sparsity pattern of the iterates is the same as the sparsity pattern of the solution. It means that for $k$ large enough, $x^{(k)}_i = 0$ for all $i$ such that $x^\star_i=0$.
Then, many other results of finite model identification have been shown in different settings and for various algorithms.
For the projected gradient descent algorithm, identification was proved for polyhedral constraints \citep{Burke_More88}, for general convex constraints \citep{Wright1993}, and even non-convex constraints \citep{Hare_Lewis04}.
More recently, identification was proved for proximal gradient algorithm \citep{MercierVijayasundaram1979,Combettes_Wajs05}, for the $\ell_1$ regularized problem \citep{Hare2011}.
\citet{Liang_Fadili_Peyere14,Liang_Fadili_Peyre17,Vaiter_Peyre_Fadili17} have shown model identification and local linear convergence for proximal gradient descent.
These results have then been extended to other popular machine learning algorithms such as SAGA, SVRG \citep{Poon_Liang_Schonlieb2018} and ADMM \citep{Poon_Liang2019}.
To our knowledge, CD has not been extensively studied with a similar generality.
Some identification results have been shown for CD, but only on specific models \citep{She_Scmidt2017,Massias_Vaiter_Gramfort_Salmon19} or variants of CD \citep{Wright2012}, in general, under restrictive hypothesis.
 The authors are not aware of generic model identification results for CD \Cref{alg:bcd}.
%
\subsection{Notation}
%
\paragraph{General notation.}
We write $\normin{\cdot}$ the Euclidean norm on vectors.
For, $x, \gamma \in \bbR^p$, the weighted norm is denoted $\normin{x}_\gamma \eqdef \sqrt{\sum_{j=1}^p \gamma_j x_j^2}$.
For a differentiable function
$\psi : \mathbb{R}^p \mapsto \mathbb{R}^p$,
at $x\in\mathbb{R}^p$,
we write $\cJ\psi (x) \in \mathbb{R}^{p \times p}$ the Jacobian of $\psi$ at $x$.
For a set $S$, we denote by $S^c$ its complement.
We denote $[p]=\{1,\dots,p\}$.
Let $(e_j)_{j=1}^{p}$ be the vectors of the canonical base of $\mathbb{R}^p$.
We denote the coordinatewise multiplication of two vectors $u$ and $v$ by $u \odot v$ and by $u \odot M$ the row wise multiplication between a vector and a matrix.
We denote by $\mathcal{B}(x, \epsilon)$ the ball of center $x$ and radius $\epsilon$.
The spectral radius of a matrix $M$ is denoted $\rho(M)$.
\paragraph{Convex analysis.}
We recall the definition of the proximity operator of a convex function $g$,
    for any $\gamma > 0$:
    \begin{align*}
        \prox_{\gamma g}(x)
        =
        \argmin_{y\in \mathbb{R}^{p}} \frac{1}{2\gamma}
        || x - y ||^{2} + g(y)
        \enspace .
    \end{align*}
Let $\mathcal{C}\subset \mathbb{R}^{p}$ be a convex set, $\text{aff}(\mathcal{C})$ denotes its affine hull, the smallest affine set containing $\mathcal{C}$, and $\text{ri}(\mathcal{C})$ denotes its relative interior (the interior of its affine hull).
The indicator function of $\cC$ is the function defined for any $x\in \bbR^p$ by
\begin{align}
    \delta_{\cC}(x)=
    \begin{cases}
        0 \text{ if }x \in \cC \\
        +\infty \text{ otherwise } \enspace .
    \end{cases}
\end{align}
The domain of a function $f$ is defined as $\text{dom}(f) = \{x\in\mathbb{R}^p: f(x)<+\infty\}$.
For a convex function $f$, $\partial f(x)$ denotes its subdifferential at $x$  and is given by $\partial f(x) = \{s\in \mathbb{R}^{p}: f(y) \geq f(x) + \langle s, y-x\rangle, \forall y\in \text{dom}(f) \}$.
We denote by $L_j$ the coordinatewise Lipschitz constants of $\nabla_j f$, \ie, for all $x \in \bbR^p$, $h_j \in \bbR$:
\begin{align}
    ||\nabla_j f(x + e_j h_j) - \nabla_j f(x)||
    \leq
    L_j |h_j|
    \enspace .
\end{align}
\paragraph{Coordinate descent.}
We denote $0 < \gamma_j \leq 1 / L_j$ the local step size and $\gamma = (\gamma_1,\dots,\gamma_p)^\top $.
To prove model identification we need to ``keep track'' of the iterates: following the notation from \cite{Beck_Tetruashvili13} coordinate descent can be written:
{\fontsize{4}{4}\selectfont
\begin{algorithm}[H]
\SetKwInOut{Input}{input}
\SetKwInOut{Init}{init}
\SetKwInOut{Parameter}{param}
\caption{\textsc{Proximal coordinate descent}}
\Input{$\gamma_1, \dots, \gamma_p \in \mathbb{R}_+, n_{\text{iter}} \in \bbN, x^{(0)} \in \bbR^p
$}
    \For(\tcp*[f]{index selection}){$k = 0,\dots, n_{\text{iter}}$
    }{
        $x^{(0, k)} \! \leftarrow \! x^{(k)}$

        \For{$j = 1, \hdots, p$}{
            $x^{(j, k)} \! \leftarrow \! x^{(j-1, k)}$

            $x_j^{(j, k)} \! \leftarrow \!
        \prox_{\gamma_j g_j} \!
           \left( x_j^{(j-1, k)} - \gamma_j \nabla_j f (x^{(j-1, k)}) \right) $
        }
        $x^{(k+1)} \! \leftarrow \! x^{(p, k)}$
    }
\Return{
    $x^{n_{\text{iter}}+1}$
    }
\label{alg:bcd}
\end{algorithm}
}
\subsection{Assumptions on composite problem} \label{sub:pb_statement}
We consider the optimization problem defined in \Cref{general_opt} with the following assumptions:
\begin{assumption}[Smoothness]\label{ass:conv_div_lip}
    $f$ is a convex and differentiable function, with a Lipschitz gradient.
   \end{assumption}
   \begin{assumption}[Proper, closed, convex]\label{ass:closed_proper_conv}
    For any $j \in [p], g_j$ is proper, closed and convex.
   \end{assumption}
   \begin{assumption}[Existence]\label{ass:non_empty}
    The problem admits at least one solution:
       \begin{align}
           \argmin_{x\in \bbR^p} \Phi(x) \neq \emptyset
           \enspace  .
       \end{align}
   \end{assumption}
   \begin{assumption}[Non degeneracy]\label{ass:non_degeneracy}
    The problem is non-degenerate: for any $x^{\star} \in \argmin_{x\in \bbR^p} \Phi(x)$
    \begin{equation}
        - \nabla f (x^\star) \in \text{ri}\left(\partial g(x^\star)\right)
        \enspace .
    \end{equation}
    \Cref{ass:non_degeneracy} can be seen as a generalization of qualification constraints \citep[Sec. 1]{Hare_Lewis2007}.
   \end{assumption}
%
\subsection{Contributions}
%
With mild assumptions on the $g_j$ functions, for the \textbf{cyclic} proximal coordinate descent algorithm:
\begin{itemize}
    \item We prove finite time model identification (\cref{thm:finite_identification}).
    \item We provide local linear convergence rates (\cref{thm:local_linear}).
    \item We illustrate our results on multiple real datasets and estimators (\Cref{sec:expes}) showing that our theoritical rates match the empirical ones.
\end{itemize}
%


\section{Model identification for CD}
\label{sec:identification}
As stated before, the solutions of the Lasso
are structured.
Using an iterative algorithm like \Cref{alg:bcd} to find an approximate solution (since we stop after a finite number of iterations) brings the question of structure recovery.
For the Lasso, the underlying structure, also called model \citep{candes_recht12}, is identified by the Forward-Backward algorithm.
It means that after a finite number of iterations, the iterative algorithm leads to an approximate solution that shares a similar structure than the true solution of the optimization problem \citep{Liang_Fadili_Peyere14, Vaiter_Peyre_Fadili17,Fadili_Malick_Peyre2018}.
For the Lasso, the underlying model is related to the notion of support: \ie the non-zero coefficients for the Lasso, and it can be generalized for the case of completely separable functions as follows:
\begin{definition}[{Generalized support, \citealt{Sun_Jeong_Nutini_Schmidt2019}}]\label{def:gsupp}
    We call \emph{generalized support} $\cS_x \subseteq [p]$ the set of indices $j \in [p]$ where $g_j$ is differentiable at $x_j$:
    \begin{align}
        \cS_x
        \eqdef \{j \in [p]: \partial g_j(x_j) \text{ is a singleton}\}
        \enspace .
    \end{align}
\end{definition}
This notion can be unified with the definition of model subspace from \citet[Sec. 3.1]{Vaiter_Golbabaee_Fadili_Peyre2015}:
\begin{definition}[{Model subspace, \citealt{Vaiter_Golbabaee_Fadili_Peyre2015}}]
    We denote the model subspace at $x$:
    \begin{align}
        T_x = \{u\in \mathbb{R}^{p}: \forall j \in \mathcal{S}_x^{c}, u_j = 0\} \enspace .
    \end{align}
\end{definition}
See \Cref{lemma:local_c2_PS} in \Cref{app:proofs_model_id} for details.

\textbf{Examples in machine learning.} \\
\textit{The $\ell_1$ norm.} The function $g(x) = \sum_{i=1}^{p}|x_i|$ is certainly the most popular nonsmooth convex regularizer promoting sparsity.
Indeed, the $\ell_1$ norm generates structured solution with model subspace~\citep{Vaiter_Peyre_Fadili17}.
We have that $\mathcal{S}_x = \{j \in [p] : x_j \neq 0\}$ since $|\cdot|$ is differentiable everywhere but not at $0$, and the model subspace reads:
\begin{align}
    T_x
    =
    \{u\in\mathbb{R}^p: \text{supp}(u)\subseteq\text{supp}(x)\}
    \enspace .
\end{align}
\textit{The box constraints indicator function $\delta_{[0, C]}$.}
    This indicator function appears for instance in box constrained optimization problems such as the dual problem of the SVM.
    Let $\mathcal{I}_x^0 = \left\{j\in [p]: x_j=0 \right\}$ and $\mathcal{I}_x^C = \left\{j\in [p]: x_j=C \right\}$, then
    \begin{align}
         T_x & = \{u\in\mathbb{R}^p: \mathcal{I}_x^0 \subseteq \mathcal{I}_u^0 \text{ and } \mathcal{I}_x^C \subseteq \mathcal{I}_u^0\} \nonumber .
    \end{align}
    For the SVM, model identification boils down to finding the active set of the box constrained quadratic optimization problem after a finite number of iterations.

We now turn to our identification result.
To ensure model identification, we need the following (mild) assumption:
\begin{assumption}[Locally $\mathcal{C}^2$]\label{ass:locally_c2}
    For all $j \in \mathcal{S}_{x^\star}$, $g_j$ is locally $\mathcal{C}^2$ around $x_j^\star$, and $f$ is locally $\mathcal{C}^2$ around $x^\star$.
\end{assumption}
It is satisfied for the Lasso and the dual SVM problem mentioned above, but also for sparse logistic regression or elastic net.
The following theorem shows that the CD (\Cref{alg:bcd}) has the model identification property with local constant step size $0 < \gamma_j \leq 1 / L_j$:
\begin{theorem}[Model identification of CD]
    \label{thm:finite_identification}
    Consider a solution $x^\star\in \argmin_{x\in\bbR^p} \Phi(x)$ and $\mathcal{S} = \mathcal{S}_{x^\star}$. Suppose
    \begin{enumerate}
        \item \Cref{ass:conv_div_lip,ass:closed_proper_conv,ass:non_empty,ass:non_degeneracy,ass:locally_c2} hold.
        \item The sequence $(x^{(k)})_{k\geq 0}$ generated by \Cref{alg:bcd} converges to $x^\star$.
    \end{enumerate}
    Then, \Cref{alg:bcd} identifies the model after a finite number of iterations,  which means that there exists $K>0$ such that for all
    $k\geq K$,
    $x_{\cS^c}^{(k)} = x_{\cS^c}^{\star}$.
\end{theorem}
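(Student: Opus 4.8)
The plan is to leverage the non-degeneracy assumption (\Cref{ass:non_degeneracy}) together with the local $\mathcal{C}^2$ structure to show that each coordinate indexed in $\mathcal{S}^c$ gets ``pinned'' to its optimal value once the iterates are close enough to $x^\star$. The key structural fact is this: for $j \in \mathcal{S}^c$, the subdifferential $\partial g_j(x_j^\star)$ is \emph{not} a singleton, and the non-degeneracy condition $-\nabla f(x^\star) \in \mathrm{ri}(\partial g(x^\star))$ guarantees that $-\nabla_j f(x^\star)$ lies in the \emph{relative interior} of $\partial g_j(x_j^\star)$. This strict interiority is what provides the ``slack'' needed for finite identification.

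First I would recall the fixed-point characterization of the proximal coordinate update. At optimality, $x_j^\star = \prox_{\gamma_j g_j}(x_j^\star - \gamma_j \nabla_j f(x^\star))$, which is equivalent to $-\nabla_j f(x^\star) \in \partial g_j(x_j^\star)$. The core of the argument is a local stability statement for the proximity operator on the coordinates in $\mathcal{S}^c$: because $-\nabla_j f(x^\star)$ sits strictly inside $\partial g_j(x_j^\star)$, there is a neighborhood of the input $x_j^\star - \gamma_j \nabla_j f(x^\star)$ on which $\prox_{\gamma_j g_j}$ returns exactly $x_j^\star$. Concretely, for $j \in \mathcal{S}^c$ one can typically show that $x_j^\star$ is a point where $g_j$ has a ``kink'' (e.g.\ $x_j^\star = 0$ for the $\ell_1$ norm, or $x_j^\star \in \{0, C\}$ for the box constraint), and the relative-interior condition forces the proximal map to be locally constant there. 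I would isolate this as the main lemma: there exists $\epsilon > 0$ such that whenever the proximal input argument lies within $\epsilon$ of $x_j^\star - \gamma_j\nabla_j f(x^\star)$, the output equals $x_j^\star$.

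Next I would propagate this through the convergence hypothesis. Since $(x^{(k)}) \to x^\star$ and $\nabla f$ is continuous (indeed locally $\mathcal{C}^1$ by \Cref{ass:locally_c2}), the intermediate iterates $x^{(j-1,k)}$ also converge to $x^\star$ as $k \to \infty$ for every inner index $j$ — this uses that within one sweep only finitely many coordinates change and each update is continuous. Consequently, the proximal input $x_j^{(j-1,k)} - \gamma_j \nabla_j f(x^{(j-1,k)})$ converges to $x_j^\star - \gamma_j \nabla_j f(x^\star)$. By the local stability lemma, for $k$ large enough (say $k \geq K$) this input falls into the $\epsilon$-neighborhood for \emph{every} $j \in \mathcal{S}^c$ simultaneously (finitely many coordinates, take the maximum of the thresholds), forcing $x_j^{(j,k)} = x_j^\star$. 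Since coordinates in $\mathcal{S}^c$ that are already set to $x_j^\star$ are not subsequently altered within the sweep in a way that moves them off $x_j^\star$, we conclude $x_{\mathcal{S}^c}^{(k)} = x_{\mathcal{S}^c}^\star$ for all $k \geq K$.

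The hard part will be establishing the local-constancy lemma for the proximity operator rigorously from the relative-interior condition, since $g_j$ may be nonsmooth exactly at $x_j^\star$ and the relative interior lives inside the (possibly lower-dimensional) affine hull of $\partial g_j(x_j^\star)$. The delicate point is translating $-\nabla_j f(x^\star) \in \mathrm{ri}(\partial g_j(x_j^\star))$ into an open-ball condition on the \emph{input} of the prox, which requires relating perturbations of the prox input to perturbations of the selected subgradient; one must be careful that the relevant directions of variation stay within the regime where $x_j^\star$ remains the minimizer. I would expect to invoke the first-order optimality conditions of the prox subproblem and a standard argument showing that strict complementarity (the relative-interior condition) is preserved under small perturbations, analogous to the identification proofs for Forward-Backward in \citet{Liang_Fadili_Peyere14}. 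The remaining bookkeeping — continuity of intermediate iterates and taking a finite maximum over thresholds — is routine.
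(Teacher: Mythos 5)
Your proposal is correct, but it takes a genuinely different route from the paper. The paper proceeds abstractly: it first shows (\Cref{lemma:local_c2_PS}) that \Cref{ass:locally_c2} makes $g$ partly smooth at $x^\star$ relative to $x^\star + T_{x^\star}$ in the sense of \citet{Lewis2002}, then proves $\text{dist}\left(\partial \Phi(x^{(k+1)}), 0\right) \to 0$ by stacking the per-coordinate prox optimality inclusions over one sweep, and finally invokes \citet[Thm.~5.3]{Hare_Lewis04}. You instead argue directly: separability localizes \Cref{ass:non_degeneracy} coordinatewise (the relative interior of a product is the product of relative interiors), and for $j \in \cS^c$ the set $\partial g_j(x_j^\star)$ is a non-singleton interval $[a_j, b_j] \subseteq \bbR$ whose affine hull is all of $\bbR$, so the ``delicate point'' you worry about in your last paragraph evaporates in this scalar setting: if $v$ is within $\epsilon$ of $x_j^\star - \gamma_j \nabla_j f(x^\star)$ for small $\epsilon$, then $\tfrac{1}{\gamma_j}(v - x_j^\star) \in \left]a_j, b_j\right[ \subseteq \partial g_j(x_j^\star)$, which is exactly the optimality condition characterizing $x_j^\star = \prox_{\gamma_j g_j}(v)$; hence the prox is locally constant. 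Your propagation step is also sound, since every coordinate of an intermediate iterate $x^{(j-1,k)}$ equals the corresponding coordinate of either $x^{(k)}$ or $x^{(k+1)}$, so all intermediate iterates converge to $x^\star$ and the prox inputs eventually land in the constancy neighborhoods at every sweep $k \geq K$. Amusingly, your key lemma is proved verbatim in the paper's Appendix B, but only to establish differentiability of the fixed-point map for the linear-rate theorem, not for identification. The trade-offs: your argument is elementary, self-contained, and does not actually use \Cref{ass:locally_c2} at all (only non-degeneracy, convergence, and separability), whereas the paper's partial-smoothness/Hare--Lewis route requires that assumption but generalizes naturally to block coordinate descent and non-separable partly smooth regularizers --- precisely the extension the paper's conclusion anticipates.
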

This result implies that for $k$ large enough, $x^{(k)}$ shares the support of $x^\star$ (potentially smaller).

\begin{proofsketch}[{\Cref{thm:finite_identification}}]
    \begin{itemize}
        \item First we show that \Cref{ass:conv_div_lip,ass:closed_proper_conv,ass:non_empty,ass:non_degeneracy,ass:locally_c2} implies that $g$ is \emph{partly smooth} \citep{Lewis2002} at $x^\star$ relative to the affine space $x^\star + T_{x^\star}$.
        \item Then we show that for the CD \Cref{alg:bcd}:
        $\text{dist}\left ( \partial \Phi(x^{(k)}), 0 \right ) \rightarrow 0$, when $ k \rightarrow \infty$, enabling us to apply \cite{Hare_Lewis04}[Thm. 5.3].
    \end{itemize}
A full proof of \Cref{thm:finite_identification} can be found in \Cref{app:proofs_model_id}.
The first point is shown in appendix \Cref{lemma:local_c2_PS}.
We show the second point below:

\begin{myproof}
As written in \Cref{alg:bcd}, one update of coordinate descent reads:
\begin{align}
    \frac{1}{\gamma_j} x_j^{(j-1, k)}
    & - \nabla_j f
    \left (
        x^{(j-1, k)}
    \right ) \nonumber
    - \frac{1}{\gamma_j} x_j^{(j, k)}
    \in \partial g_j \left  ( x_j^{(j, k)} \right)
    \\
    \frac{1}{\gamma_j} x_j^{(k)}
    & - \nabla_j f
    \left (
        x^{(j-1, k)}
    \right ) \nonumber
    - \frac{1}{\gamma_j} x_j^{(k+1)}
    \in \partial g_j \left  ( x_j^{(k+1)} \right)
    \enspace .
\end{align}
Since $g$ is separable with non-empty subdifferential, the coordinate wise subdifferential of $g$ is equal to the subdifferential of $g$, we then have
\begin{align}
   \frac{1}{\gamma} \odot x^{(k)} -
   &\left (
        \nabla_j f \left ( x^{(j-1, k)} \right )
    \right )_{j \in [p]}
    - \frac{1}{\gamma} \odot x^{(k+1)}
    \nonumber
    \\
    &\in \partial g(x^{(k+1)}) \enspace ,
\end{align}
which leads to
\begin{align}
    \frac{1}{\gamma} \odot x^{(k)} &
    - \left (
        \nabla_j f \left ( x^{(j-1, k)} \right )
    \right )_{j \in [p]}
    - \frac{1}{\gamma} \odot x^{(k+1)} \nonumber \\
    &  + \nabla f(x^{(k+1)})\in \partial \Phi(x^{(k+1)})
    \label{eq:subdiff_inclusion}
    \enspace .
\end{align}
To prove support identification using \citet[Theorem 5.3]{Hare_Lewis04}, we need to bound the distance between $\partial \Phi(x^{(k+1)})$ and $0$, using \Cref{eq:subdiff_inclusion}:
\begin{align}
    & \text{dist}\left ( \partial \Phi(x^{(k+1)}), 0 \right )^2
    \nonumber \\
    & \leq \sum_{j=1}^p
    \left | \frac{x_j^{(k)}}{\gamma_j} -
        \nabla_j f ( x^{(j-1, k)} )
        - \frac{x_j^{(k+1)}}{\gamma_j} + \nabla_j f(x^{(k+1)})
     \right |^2 \nonumber \\
    & \leq ||x^{(k)} - x^{(k+1)}||_{{\gamma^{-1}}}^2 \nonumber \\
    & + \sum_{j=1}^{p}
        \left |
            \nabla_j f \left ( x^{(j-1, k)}\right )
            - \nabla_j f \left (x^{(k+1)} \right )
        \right |^2
    \nonumber \\
    & \leq ||x^{(k)} - x^{(k+1)}||_{{\gamma^{-1}}}^2
     + L^2 \sum_{j=1}^{p} \normin{ x^{(j-1, k)} - x^{(k+1)} }^2
     \nonumber \\
    &\leq
    \underbrace{
     ||x^{(k)} - x^{(k+1)}||_{{\gamma^{-1}}}^2
    + L^2 \sum_{j=1}^p
        \sum_{j' \geq j}^p
            \left | x_{j'}^{(k)} - x_{j'}^{(k+1)} \right |^2
    }_{
        \rightarrow 0 \text{ when } k \rightarrow \infty}
    \nonumber
    \enspace .
\end{align}

\end{myproof}
We thus have:
\begin{itemize}
    \item $\text{dist}\left ( \partial \Phi(x^{(k+1)}), 0 \right ) \rightarrow 0$
    \item  $\Phi(x^{(k)})\rightarrow \Phi(x^\star)$ because $\Phi$ is prox-regular (since it is convex, see \citealt{Poliquin_Rockafellar1996b}) and subdifferentially continuous.
\end{itemize}
Then the conditions to apply \citet[Th. 5.3]{Hare_Lewis04}
are met and hence we have model identification after a finite number of iterations.
\end{proofsketch}

\textbf{Comments on \Cref{thm:finite_identification}.}
It unifies several results found in the literature: \cite{Massias_Vaiter_Gramfort_Salmon19} showed model identification for the Lasso, solved with coordinate descent, but requiring uniqueness assumption.
\cite{Nutini_Laradji_Schmidt2017} showed some identification results under strong convexity assumption on $f$.
\Cref{thm:finite_identification} do not rely on any uniqueness, strong convexity, or local strong convexity hypothesis.
Even if the solution of the optimization problem defined in \Cref{general_opt} is not unique, CD achieves model identification.


\section{Local convergence rates}
%
In this section, we prove local linear convergence of the CD \Cref{alg:bcd}. After model identification, there exists a regime where the convergence towards a solution of \Cref{general_opt} is linear.
Local linear convergence was already proved in various settings such as for ISTA and FISTA algorithms (\ie with an $\ell_1$ penalty, \citealt{Tao2016local})  and then for the general Forward-Backward algorithm \citep{Liang_Fadili_Peyere14}.

Local linear convergence requires an additional assumption: \textit{restricted injectivity}.
It is classical for this type of analysis as it can be found in \citet{Liang_Fadili_Peyre17} and \citet{ Poon_Liang2019}.
\begin{assumption} (Restricted injectivity)\label{ass:restricted_injectivity}
    For a solution $x^\star\in \argmin_{x\in\bbR^p} \Phi(x)$,
    the restricted Hessian to its generalized support $\mathcal{S} = \mathcal{S}_{x^\star}$ is definite positive, \ie
    \begin{align}
        \nabla^{2}_{\mathcal{S}, \mathcal{S}}f(x^\star) \succ 0 \enspace .
    \end{align}
\end{assumption}
For the Lasso, \Cref{ass:restricted_injectivity} is a classical necessary condition to ensure uniqueness of the minimizer \citep{Fuchs04}.

In order to study local linear convergence, we consider the fixed point iteration of a complete epoch (an epoch is a complete pass over all the coordinates).
A full epoch of CD can be written:
\begin{equation}\label{eq:operator_psi}
    x^{(k+1)} = \psi(x^{(k)}) \eqdef \mathcal{P}_p \circ \hdots \circ \mathcal{P}_1 (x^{(k)})\enspace ,
\end{equation}
where $\cP_j$ are coordinatewise sequential applications of the proximity operator $\cP_j : \bbR^p \rightarrow \bbR^p$:
\begin{align}
        x
        & \mapsto
        \left (\begin{array}{c}
            x_1\\
            \vdots \\
            x_{j-1} \\
            \prox_{\gamma_j g_j}
            \big( x_j - \gamma_j\nabla_j f(x) \big)\\
            x_{j+1} \\
            \vdots \\
            x_p
        \end{array}
        \right )
        \nonumber \enspace .
\end{align}
Thanks to model identification (\Cref{thm:finite_identification}), we are able to prove that once the generalized support is correctly identified, there exists a regime where CD algorithm converges linearly towards the solution:
\begin{theorem}[Local linear convergence]\label{thm:local_linear}
    Consider a solution $x^\star\in \argmin_{x\in\bbR^p} \Phi(x)$ and $\mathcal{S} = \mathcal{S}_{x^\star}$. Suppose
    \begin{enumerate}
        \item \Cref{ass:conv_div_lip,ass:closed_proper_conv,ass:non_empty,ass:non_degeneracy,ass:locally_c2,ass:restricted_injectivity} hold.
        \item The sequence $(x^{(k)})_{k\geq 0}$ generated by \Cref{alg:bcd} converges to $x^\star$.
        \item The model has been identified \ie there exists $K\geq 0$ such as for all $k\geq K$
        \begin{align*}
            x_{\cS^c}^{(k)} = x_{\cS^c}^\star
            \enspace.
        \end{align*}
    \end{enumerate}

    Then $(x^{(k)})_{k\geq K}$ converges linearly towards $x^\star$.
    More precisely, for any $\nu \in [\rho(\cJ \psi_{\cS, \cS}(x^\star)), 1[$, there exists $K>0$ and a constant $C$ such that for all $k\geq K$,
    \begin{equation*}
        \normin{x_\cS^{(k)} - x_\cS^\star}
        \leq C \nu^{(k-K)}
        \normin{x_\cS^{(K)} - x_\cS^\star}
        \enspace .
    \end{equation*}
\end{theorem}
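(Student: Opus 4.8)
The plan is to regard a full epoch as the fixed-point iteration $x^{(k+1)} = \psi(x^{(k)})$ of \Cref{eq:operator_psi}, to verify that $x^\star$ is a fixed point, and to establish that $\psi$ is $\mathcal{C}^1$ near $x^\star$ with $\rho(\cJ\psi_{\mathcal{S},\mathcal{S}}(x^\star)) < 1$; the statement then follows from the standard local linear convergence result for smooth fixed-point iterations. First I would record that, by the coordinatewise optimality conditions at $x^\star$, each proximity operator maps its argument $x_j^\star - \gamma_j\nabla_j f(x^\star)$ back to $x_j^\star$, so $\psi(x^\star) = x^\star$ and all intermediate sweep points coincide with $x^\star$ when $x^{(k)}=x^\star$.

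Next I would establish smoothness of $\psi$ near $x^\star$ together with the freezing of the off-support coordinates. For $j\in\mathcal{S}$, \Cref{ass:locally_c2} makes $g_j$ locally $\mathcal{C}^2$, hence $\prox_{\gamma_j g_j}$ is $\mathcal{C}^1$ near $x_j^\star - \gamma_j\nabla_j f(x^\star)$ (implicit function theorem applied to $y + \gamma_j g_j'(y)=u$, whose derivative $1+\gamma_j g_j''(y)>0$). For $j\in\mathcal{S}^c$, \Cref{ass:non_degeneracy} places $-\nabla_j f(x^\star)$ in the relative interior of $\partial g_j(x_j^\star)$, so the proximity operator is locally constant there and pins the coordinate: in a neighborhood of $x^\star$, $\psi(x)_{\mathcal{S}^c}=x^\star_{\mathcal{S}^c}$. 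Combined with identification, the dynamics on $\mathcal{S}$ is governed by the restricted $\mathcal{C}^1$ map $x_\mathcal{S}\mapsto\big(\psi(x_\mathcal{S},x^\star_{\mathcal{S}^c})\big)_\mathcal{S}$, whose differential at $x^\star$ is $\cJ\psi_{\mathcal{S},\mathcal{S}}(x^\star)$.

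The crux is bounding the spectral radius. By the chain rule evaluated at $x^\star$, $\cJ\psi_{\mathcal{S},\mathcal{S}}(x^\star)$ is a product of the per-coordinate Jacobians, which I would reorganize into a Gauss--Seidel matrix splitting. Writing $H=\nabla^2_{\mathcal{S},\mathcal{S}}f(x^\star)$, $D_g=\mathrm{diag}(g_j''(x_j^\star))_{j\in\mathcal{S}}$, $A=H+D_g$ (symmetric, and SPD by \Cref{ass:restricted_injectivity} since $D_g\succeq0$), $\Gamma=\mathrm{diag}(\gamma_j)_{j\in\mathcal{S}}$, and $A_L$ the strict lower triangle of $A$, a direct computation yields
\[
    \cJ\psi_{\mathcal{S},\mathcal{S}}(x^\star) = I - N^{-1}A, \qquad N = \Gamma^{-1}+D_g+A_L .
\]
By the classical Householder--John criterion for matrix splittings, $\rho(I-N^{-1}A)<1$ as soon as $N+N^\top-A\succ0$. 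Here $N+N^\top-A = 2\Gamma^{-1}+D_g-\mathrm{diag}(H)$, whose $j$-th diagonal entry is $2/\gamma_j+g_j''(x_j^\star)-H_{jj}$. Using $\gamma_j\le1/L_j$ together with $H_{jj}\le L_j$ (the coordinatewise Lipschitz constant bounds the corresponding second derivative) gives $2/\gamma_j\ge2H_{jj}$, so this entry is at least $H_{jj}+g_j''(x_j^\star)=A_{jj}>0$. Hence $N+N^\top-A\succ0$ and $\rho(\cJ\psi_{\mathcal{S},\mathcal{S}}(x^\star))<1$. I expect this verification, namely identifying the correct splitting and checking the positive-definiteness condition, to be the main obstacle.

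Finally I would conclude via the fixed-point argument. With $\rho\eqdef\rho(\cJ\psi_{\mathcal{S},\mathcal{S}}(x^\star))<1$ and the restricted map $\mathcal{C}^1$, for any $\nu>\rho$ I pick a norm $\normin{\cdot}_\ast$ in which $\normin{\cJ\psi_{\mathcal{S},\mathcal{S}}(x^\star)}_\ast\le\nu$; continuity of the differential then makes the restricted map a $\nu$-contraction for that norm on a ball $\mathcal{B}(x^\star,\epsilon)$. Since $x^{(k)}\to x^\star$, I enlarge $K$ so that $x^{(K)}_\mathcal{S}\in\mathcal{B}(x^\star_\mathcal{S},\epsilon)$; iterating the contraction and translating back to the Euclidean norm, absorbing the norm-equivalence constant into $C$, gives $\normin{x^{(k)}_\mathcal{S}-x^\star_\mathcal{S}}\le C\nu^{k-K}\normin{x^{(K)}_\mathcal{S}-x^\star_\mathcal{S}}$ for all $k\ge K$, which is the claimed bound.
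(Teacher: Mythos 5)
Your proof is correct, and it reaches the crucial spectral-radius bound by a genuinely different route than the paper. The skeleton is shared: fixed-point formulation of a full epoch, differentiability of the prox at the solution via the implicit function theorem for $j\in\cS$, local constancy of the prox off the support via \Cref{ass:non_degeneracy}, and a standard $\mathcal{C}^1$ fixed-point local convergence result to finish (the paper invokes \citet[Th.~1, Sec.~2.1.2]{Polyak1987}; your norm-equivalence argument re-derives it). The divergence is the proof of $\rho(\cJ\psi_{\cS,\cS}(x^\star))<1$. The paper conjugates the chain-rule product of per-coordinate Jacobians by $M^{1/2}$, where $M=\nabla^2_{\cS,\cS}f(x^\star)+\diag(u)$ in fact coincides with your $A=H+D_g$ (the paper's $u_j$ equals $g_j''(x_j^\star)$), writes the conjugated epoch Jacobian as $(\Id-B_{j_{|\cS|}})\cdots(\Id-B_{j_1})$ with each $B_j/\normin{B_j}$ an orthogonal projector onto $\Span(M_{:,j}^{1/2})$, and concludes that the product has operator norm $<1$ because these directions span the whole space. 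You instead recognize the epoch Jacobian as the iteration matrix $\Id-N^{-1}A$ of a Gauss--Seidel/SOR-type splitting with $N=\Gamma^{-1}+D_g+A_L$ and invoke the classical Householder--John criterion; your P-regularity check $N+N^\top-A=2\Gamma^{-1}+D_g-\diag(H)\succ0$ is right, and it is exactly where the step-size condition $\gamma_j\le 1/L_j$ (through $H_{jj}\le L_j$) enters. I verified your asserted identity $\cJ\psi_{\cS,\cS}(x^\star)=\Id-N^{-1}A$ by linearizing one epoch around the fixed point: with prox derivatives $p_j=1/(1+\gamma_j g_j''(x_j^\star))$ the propagated error satisfies $N\delta^{+}=(N-A)\delta$, so the identity holds; in a full write-up this computation must be carried out explicitly, since it plays the role of the paper's per-coordinate Jacobian calculation and chain rule. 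As for what each approach buys: yours is shorter and places the result squarely inside classical splitting theory, making the role of the step sizes transparent; the paper's is self-contained, and indeed its $M^{1/2}$-conjugation is precisely the passage to the $A$-energy norm by which Householder--John is classically proved, so the paper is in effect re-deriving the special case of that theorem it needs.
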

The complete proof of \Cref{thm:local_linear} can be found in \Cref{app:proofs_local_lin_conv}.
%
\begin{figure*}[tb]
    \centering
    \centering
    \includegraphics[width=0.7\linewidth]{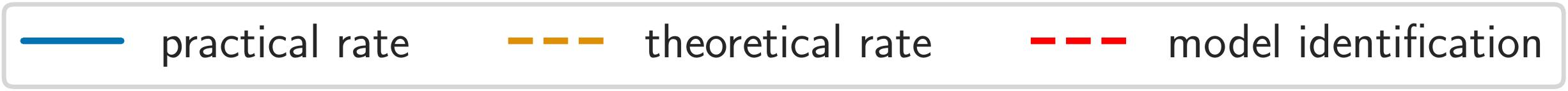}
    \centering
    \includegraphics[width=0.8\linewidth]{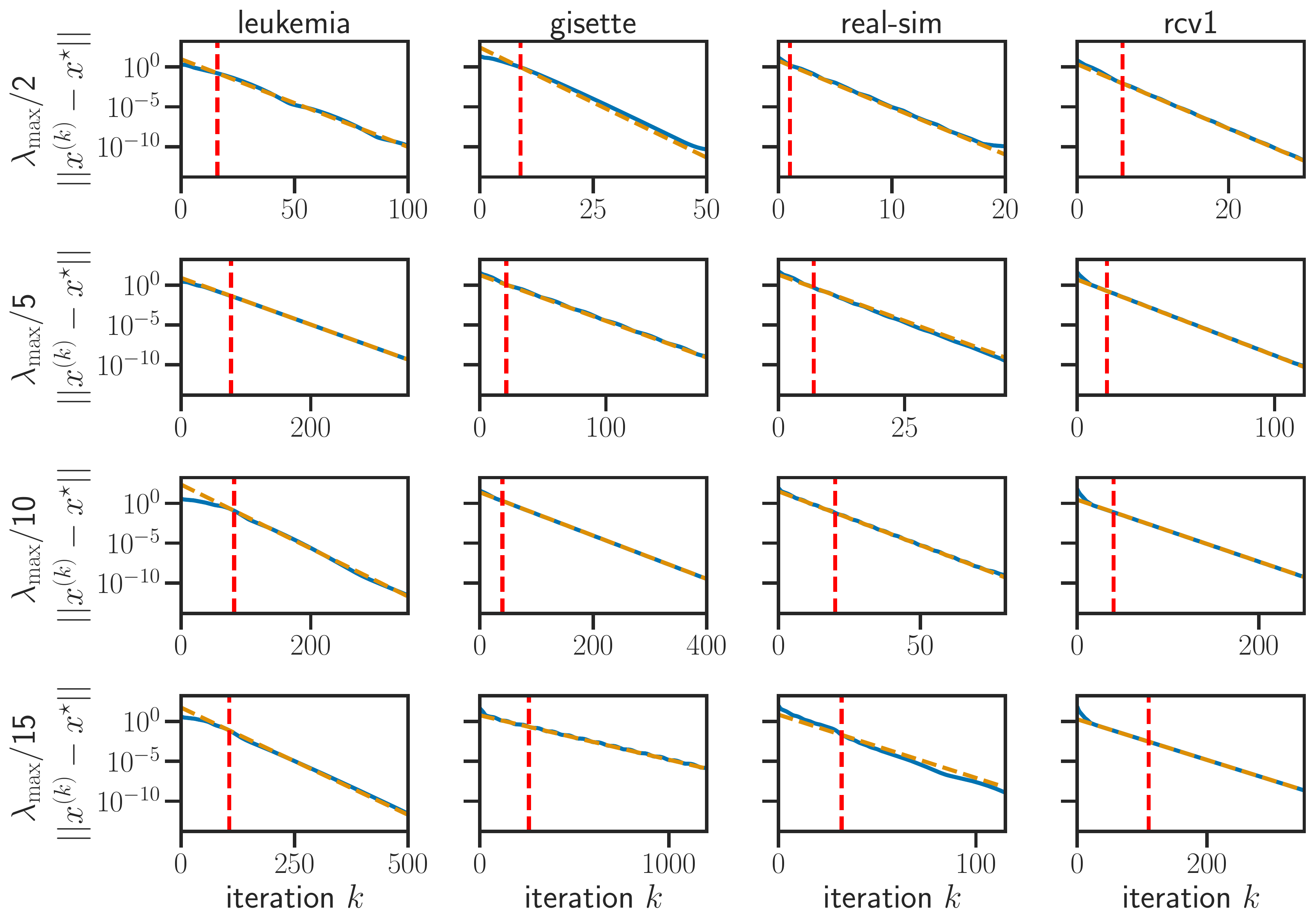}
    \caption{
        \textbf{Lasso, linear convergence.}
    Distance to optimum, $\normin{x^{(k)} - x^\star}$, as a function of the number of iterations $k$, on 4 different datasets: \emph{leukemia}, \emph{gisette}, \emph{rcv1},
    and \emph{real-sim}.}
    \label{fig:linear_convergence_lasso}
\end{figure*}

\begin{proofsketch}[\Cref{thm:local_linear}]
    \begin{itemize}
        \item A key element of the proof is to consider a full epoch of CD: it can be written as a fixed point iteration:
        $x^{(k+1)} = \psi(x^{(k)})$ (see \Cref{eq:operator_psi}).
        \item We then show that the proximal operators, $\prox_{\gamma_j g_j}$, evaluated at $x_j^\star - \gamma_j \nabla_j f(x^\star)$ are differentiable (for $j \in [p]$).
        Once stated, the differentiability of the proximal operator allows us to write the Taylor expansion of $\psi$:
        \begin{align}
            x^{(k+1)} - x^\star
            &= \psi(x^{k}) - \psi(x^\star)  \nonumber\\
            &= \langle \cJ \psi(x^\star), x^{(k)} - x^\star \rangle + o(\normin{x^{(k)} - x^\star}).
            \nonumber
        \end{align}
        \item Capitalizing on model identification (\Cref{thm:finite_identification}) we start from $x_{\mathcal{S}^c}^{(k)} = x_{\mathcal{S}^c}^\star$ and show the bound $\rho(\cJ \psi_{\cS, \cS}(x^\star))<1$ on the spectral radius of the restricted Jacobian of $\psi$ at $x^\star$:  $\cJ \psi_{\cS, \cS}(x^\star)$.
        \item Finally, all the conditions are met to apply \citet[Th. 1, Sec. 2.1.2]{Polyak1987}. The latter reference provides sufficient conditions for local linear convergence of sequences based non linear fixed point iterations.
    \end{itemize}
\end{proofsketch}

%
%
%
%
\section{Experiments}\label{sec:expes}
%
%
\begin{figure*}[tb]
    \centering
    \centering
    \includegraphics[width=0.7\linewidth]{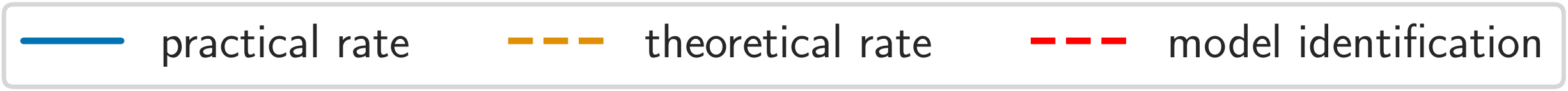}
        \centering
        \includegraphics[width=0.8\linewidth]{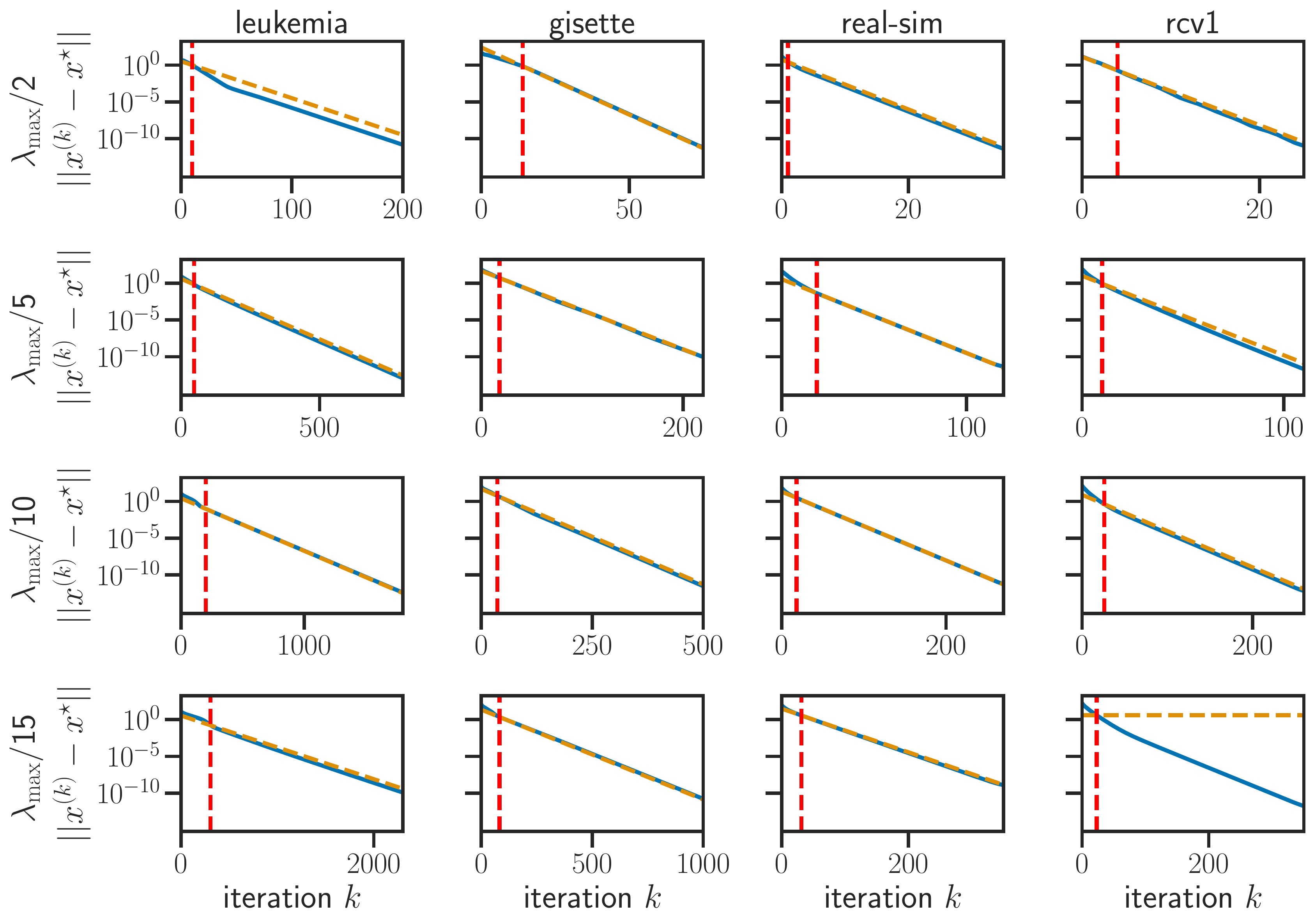}
    \caption{
        \textbf{Sparse logistic regression, linear convergence.}
    Distance to optimum, $\normin{x^{(k)} - x^\star}$, as a function of the number of iterations $k$, on 4 different datasets: \emph{leukemia}, \emph{gisette}, \emph{rcv1},
    and \emph{real-sim}.}
    \label{fig:linear_convergence_logreg}
\end{figure*}
We now illustrate \Cref{thm:finite_identification,thm:local_linear} on multiple datasets and estimators:
the Lasso, the logistic regression and the SVM.
In this section, we consider a design matrix $A\in \mathbb{R}^{n\times p}$ and a target $y\in\mathbb{R}^{n}$ for regression (Lasso) and $y\in \{-1, 1\}^{n}$ for classification (logistic regression and support-vector machine).
We used classical datasets from \texttt{libsvm} \citep{Chang_Lin11} summarized in \Cref{table:summary_data}.

In \Cref{fig:linear_convergence_lasso,fig:linear_convergence_logreg,fig:linear_convergence_svm} the distance of the iterates to the optimum, $\normin{x^{(k)} - x^\star}$ as a function of the number of iterations $k$ is plotted as a solid blue line.
The vertical red dashed line represents the iteration $k^\star$ where the model has been identified by CD (\Cref{alg:bcd}) illustrating \Cref{thm:finite_identification}.
The yellow dashed line represents the theoretical linear rate from \Cref{thm:local_linear}.
\Cref{thm:local_linear} gives the slope of the dashed yellow line, the (arbitrary) origin point of the theoretical rate line is chosen such that blue and yellow lines coincide at identification time, \ie all lines intersect at this point.
More precisely, if $k^\star$ denotes the iteration where model identification happens, the equation of the dashed yellow line is:
\begin{equation}
    h(k)
    = \normin{x^{(k^\star)} - x^\star} \times
     \rho(\cJ \psi_{\cS, \cS}(x^\star))^{(k - k^\star) }
    \enspace .
\end{equation}
Once a solution $x^\star$ has been computed, one can calculate $\cJ \psi_{\cS, \cS}(x^\star)$ and its spectral radius for each estimator.

For the experiments we used three different estimators that we detail here.
\begin{table}[tb]
    \center
    \caption{Characteristics of the datasets.}
    \begin{tabular}{ccccc}
        \hline
        Datasets & \#samples $n$ & \#features $p$ & density \\
        \hline
        leukemia & \num{38} & \num{7129} & \num{1} \\
        gisette & \num{6000} & \num{4955} & \num{1} \\
        rcv1 & \num{20242} & \num{19959} & \num{3.6e-3} \\
        real-sim & \num{72309} & \num{20958} & \num{2.4e-3} \\
        20news & \num{5184} & \num{155148} & \num{1.9e-3} \\
        \hline
    \end{tabular}
    \label{table:summary_data}
\end{table}

\textbf{Lasso.} \citep{Tibshirani96}
The most famous estimator based on a nonsmooth optimization problem may be the Lasso. For a design matrix $A \in \bbR^{n \times p}$ and a target $y \in \bbR^n$ it writes:
\begin{align}\label{eq:lasso}
    \argmin_{x \in \bbR^p}
    \frac{1}{2n}|| Ax - y||^{2}
    + \lambda ||x||_1
    \enspace .
\end{align}
The CD update for the Lasso is given by
\begin{align}
    x_j \leftarrow \text{ST}_{\gamma_j\lambda}\left(x_j - \gamma_j A_{:,j}^{\top}(y - Ax)\right) \enspace ,
\end{align}
where $\text{ST}_{\lambda}(x)= \text{sign}(x) \cdot \max(|x| - \lambda, 0)$.
The solution of \Cref{eq:lasso} is obtained using \Cref{alg:bcd} with constant stepsizes
$1  / \gamma_j =  \frac{||A_{:, j}||^{2}}{n}$.
\textbf{Sparse logistic regression.}
The sparse logistic regression is an estimator for classification tasks.
It is the solution of the following optimization problem, for a design matrix $A \in \bbR^{n \times p}$ and a target variable $y \in \{-1, 1 \}^n$, with $\sigma(z) \eqdef \frac{1}{1 + e^{-z}}$:
\begin{equation}\label{eq:sparse_logreg}
    \argmin_{x \in \bbR^p}
    - \frac{1}{n} \sum_{i=1}^{n} \log \sigma(y_i x^{\top} A_{i,:})
    + \lambda \norm{x}_1
    \enspace .
\end{equation}

The CD update for the sparse logistic regression is
\begin{align}
    x_j \leftarrow \text{ST}_{\gamma_j\lambda}\left(x_j - \gamma_j A_{:,j}^{\top} (y \odot (\sigma(y \odot Ax) - 1))\right) \enspace .
\end{align}
The constant stepsizes for the CD algorithm to solve \Cref{eq:sparse_logreg} are given by $1  / \gamma_j = \frac{||A_{:, j}||^{2}}{4n}$.
%

\begin{figure*}[tb]
    \centering
    \centering
    \includegraphics[width=0.7\linewidth]{linear_convergence_logreg_legend}
    \centering
    \includegraphics[width=0.95\linewidth]{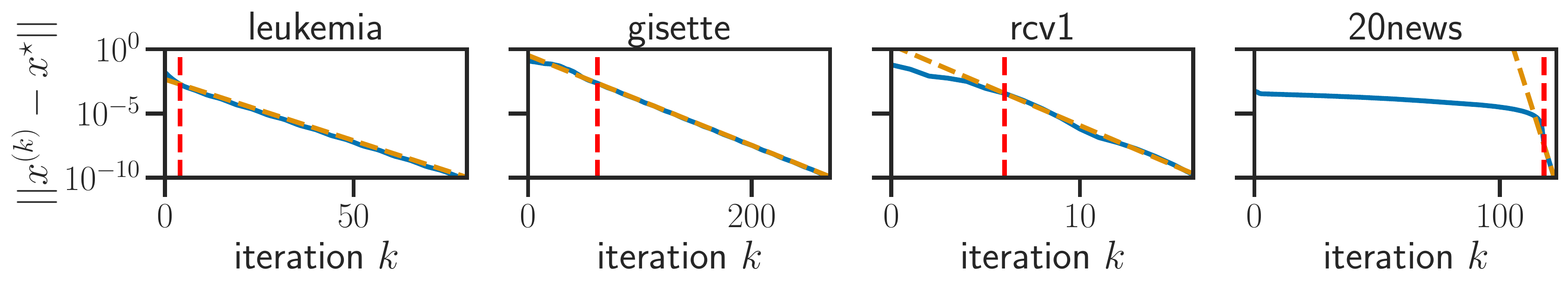}
    \caption{
        \textbf{Support vector machine, linear convergence.}
    Distance to optimum, $\normin{x^{(k)} - x^\star}$, as a function of the number of iterations $k$, on 4 different datasets: \emph{leukemia}, \emph{gisette}, \emph{rcv1} and \emph{20news}.}
    \label{fig:linear_convergence_svm}
\end{figure*}
\textbf{Support-vector machine.} \citep{Boser_Guyon_Vapnik92}
The support-vector machine (SVM) primal optimization problem is, for a design matrix $A \in \bbR^{n \times p}$ and a target variable $y \in \{-1, 1 \}^n$:
\begin{align}
    \argmin_{x \in \bbR^p}
    \frac{1}{2} \normin{x}^2
    + C  \sum_{i=1}^{n}\max \left(1 - y_i x^{\top} A_{i,:}, 0 \right)
    \enspace .
\end{align}
The SVM can be solved using the following dual optimization problem:
\begin{align}\label{eq:svm}
    \argmin_{w \in \bbR^n} &
    \frac{1}{2} w^\top (y \odot A) (y \odot A)^\top w
    - \sum_{i=1}^n w_i \nonumber \\
    & \text{subject to } 0 \leq w_i \leq C
    \enspace .
\end{align}
The CD update for the SVM reads:
\begin{align}
    w_i \leftarrow \mathcal{P}_{[0,C]}\left(w_i - \gamma_i((y\odot A)_{i,:}^{\top}(y\odot Aw) - 1\right)) \enspace ,
\end{align}
where $\mathcal{P}_{[0,C]}(x) = \min(\max(0, x), C)$.
The stepsizes of the CD algorithm to solve \Cref{eq:svm} are given by $1 / \gamma_i = ||(y \odot A)_{i,:}||^2$.
The values of the regularization parameter $C$ for each dataset from \Cref{fig:linear_convergence_svm} are given in \Cref{table:CvalueSVM}.
\begin{table}[tb]
    \center
    \caption{$C$ values for SVM.}
    \begin{tabular}{c|ccccc}
        dataset & leukemia & gisette & rcv1 & 20news \\
        \hline
        C value & $10$ & $1.5$ $10^{-2}$ & $1.5$ $10^{-2}$ & $5$ $10^{-1}$ \\
    \end{tabular}
    \label{table:CvalueSVM}
\end{table}

\textbf{Comments on \Cref{fig:linear_convergence_lasso,fig:linear_convergence_logreg,fig:linear_convergence_svm}.}
Finite time model identification and local linear convergence are illustrated on the Lasso, the sparse logistic regression and the SVM in \Cref{fig:linear_convergence_lasso,fig:linear_convergence_logreg,fig:linear_convergence_svm}.
As predicted by \Cref{thm:finite_identification}, the relative model is identified after a finite number of iterations.
For the Lasso (\Cref{fig:linear_convergence_lasso}) and the sparse logistic regression (\Cref{fig:linear_convergence_logreg}), we observe that as the regularization parameter gets smaller, the number of iterations needed by the CD algorithm to identify the model increases.
To our knowledge, this is a classical empirical observation, that is not backed up by theoretical results.
After identification, the convergence towards a solution is linear as predicted by \Cref{thm:local_linear}.
The theoretical local speed of convergence provided by \Cref{thm:local_linear} seems like a sharp estimation of the true speed of convergence as illustrated by the three figures.

Note that on \Cref{fig:linear_convergence_lasso,fig:linear_convergence_logreg,fig:linear_convergence_svm} high values of $\lambda$ (or small values of $C$) were required for the restricted injectivity \Cref{ass:restricted_injectivity} to hold.
Indeed, despite its lack of theoretical foundation, it is empirically observed that, in general, the larger the value of $\lambda$, the smaller the cardinal of the generalized support: $|\cS|$.
It makes the restricted injectivity \Cref{ass:restricted_injectivity}: $\nabla_{\cS, \cS}^2 f(x^\star) \succ 0$ easier to be satisfied.
For instance, for $\lambda = \lambda_{\max} / 20$, the restricted injectivity \Cref{ass:restricted_injectivity} was not verified for a lot of datasets for the Lasso and the sparse logistic regression (\Cref{fig:linear_convergence_lasso,fig:linear_convergence_logreg}).
In the same vein, values of $C$ for the SVM had to be chosen small enough, in order to make $|\cS|$ not too large (\Cref{fig:linear_convergence_svm}).

Note that finite time model identification is crucial to ensure local linear convergence, see for instance \emph{20news} dataset on \Cref{fig:linear_convergence_svm}.
However there exists very few quantitative theoretical results for the convergence speed of the model identification.
\citet{Nutini_Schmidt_Hare2019,Sun_Jeong_Nutini_Schmidt2019} tried to obtain some rates on the identification, quantifying ``how much the problem is qualified'', \ie how much \Cref{ass:non_degeneracy} is satisfied.
But these theoretical results do not seem to explain fully the experimental results of the CD: in particular the identification speed of the model compared to other algorithms.

\textbf{Limits.}
We would like to point out the limit of our analysis illustrated for the case of $\lambda = \lambda_{\text{max}} / 15$ for the sparse logistic regression and the \textit{rcv1} dataset in \Cref{fig:linear_convergence_logreg}.
In this case, the solution may no longer be unique.
The support gets larger and \Cref{ass:restricted_injectivity} is no longer met.
In this case, the largest eigenvalue of $\cJ\psi_{\cS, \cS}(x^\star)$ is exactly one, which leads to the constant rate observed in \Cref{fig:linear_convergence_logreg}.
Despite the largest eigenvalue being exactly $1$, a regime of locally linear convergence toward a (potentially non unique) minimizer is still observed.
Linear convergence of non-strongly convex functions starts to be more and more understood \citep{Necoara_Nesterov_Glineur2019}.
\Cref{fig:linear_convergence_logreg} with $\lambda = \lambda_{\text{max}} / 15$ for \textit{rcv1} suggests extensions of \citet{Necoara_Nesterov_Glineur2019} could be possible in the nonsmooth case.
%

%
%
%
%

\textbf{Conclusion and future work.}
In conclusion, we show finite time model identification for coordinate descent \Cref{alg:bcd} (\Cref{thm:finite_identification}).
Thanks to this identification property we were able to show local linear rates of convergence (\Cref{thm:local_linear}).
These two theoretical results were illustrated on popular estimators (Lasso, sparse logistic regression and SVM dual) and popular machine learning datasets (\Cref{sec:expes}).

A first natural extension of this paper would be to investigate block coordinate minimization: \Cref{thm:finite_identification} could be extended for blocks under general partial smoothness assumption \citep{Hare_Lewis04}.
However, it seems that \Cref{thm:local_linear} would require a more careful analysis.
A second extension could be to show linear convergence without the restricted injectivity (\Cref{ass:restricted_injectivity}), paving the way for a generalization of \citet{Necoara_Nesterov_Glineur2019} as suggested by \Cref{fig:linear_convergence_logreg}.

\section*{Acknowledgements}
This work was partly supported by ANR GraVa ANR-18-CE40-0005, by Appel à projet Plan cancer 18CP134-00 de l’INSERM/CNRS-IMB and ERC Starting Grant SLAB ERC-StG-676943.

\clearpage
\bibliographystyle{plainnat}
\bibliography{suppid}
\clearpage
\appendix
%
\section{Proofs of model identification (\Cref{thm:finite_identification})}\label{app:proofs_model_id}
%
%
Model identification often relies on the assumption that the nonsmooth function $g$ is regular enough, or more precisely \textit{partly smooth}.
Loosely speaking, a partial smooth function behaves smoothly as it lies on the related model and sharply if we move normal to that model.
Formally, we recall the definition of partly smooth functions restricted to the case of proper, lower semicontinuous and convex functions.
\begin{definition}[Partial smoothness]\label{def:partly_smooth}
    Let $g: \mathbb{R}^{p}\mapsto \mathbb{R}$ be a proper closed convex function. $g$ is said to be partly smooth at $x$ relative to a set $\mathcal{M} \subseteq \mathbb{R}^{n}$ if there exists a neighbourhood $\mathcal{U}$ of $x$ such that
    \begin{itemize}
        \item \textbf{(Smoothness)} $\mathcal{M} \cap \mathcal{U}$ is a $\mathcal{C}^{2}$-manifold and $g$ restricted to $\mathcal{M} \cap \mathcal{U}$ is $\mathcal{C}^{2}$,
        \item \textbf{(Sharpness)} The tangent space of $\mathcal{M}$ at $x$ is the model tangent space $T_x$ where $T_x = \text{Lin}(\partial g(x))^{\perp}$,
        \item \textbf{(Continuity)} The set valued mapping $\partial g$ is continuous at $x$ relative to $\mathcal{M}$.
    \end{itemize}
\end{definition}
The class of partly smooth functions was first defined in \citet{Lewis2002}.
It encompasses a large number of known nonsmooth machine learning optimization penalties, such as the $\ell_1$-norm or box constraints to only name a few, see \citet[Section 2.1]{Vaiter_Peyre_Fadili17} for details.
Interestingly, this framework enables powerful theoretical tools on model identification such as \citet{Hare_Lewis04}[Thm. 5.3].
For separable functions, next lemma gives an explicit link between the generalized support \Cref{def:gsupp} \citep{Sun_Jeong_Nutini_Schmidt2019}  and the framework of partial smooth functions \citep{Hare_Lewis04}.
\begin{lemma}\label{lemma:local_c2_PS}
    Let $x^{\star} \in \dom(g)$.
    If for every $j \in \mathcal{S}_{x^{\star}}$, $g_j$ is locally $C^2$ around $x_j^\star$ (\Cref{ass:locally_c2}), then $g$ is partly smooth at $x^\star$ relative to $x^\star + T_{ x^\star}$.
\end{lemma}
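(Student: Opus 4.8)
The plan is to verify, for the separable function $g = \sum_j g_j$ at $x^\star$ relative to the affine set $\mathcal{M} \eqdef x^\star + T_{x^\star}$, the three defining properties of \Cref{def:partly_smooth} (Smoothness, Sharpness, Continuity), exploiting throughout that separability reduces every statement about $g$ to a coordinatewise statement and that $T_{x^\star} = \{u : u_j = 0 \text{ for all } j \in \mathcal{S}^c\}$ with $\mathcal{S} = \mathcal{S}_{x^\star}$. First I would observe that since $T_{x^\star}$ is a linear subspace, $\mathcal{M}$ is an affine subspace, hence a $\mathcal{C}^2$-manifold whose tangent space at every point (in particular at $x^\star$) equals $T_{x^\star}$; this settles the manifold part of the smoothness axiom and fixes the candidate tangent space needed for sharpness.

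For the Smoothness property I would restrict $g$ to $\mathcal{M}$: on $\mathcal{M}$ the coordinates indexed by $\mathcal{S}^c$ are frozen at $x_j^\star$, so $g|_{\mathcal{M}}(x) = \sum_{j\in\mathcal{S}} g_j(x_j) + \sum_{j\in\mathcal{S}^c} g_j(x_j^\star)$, where the second sum is constant. By \Cref{ass:locally_c2} each $g_j$ with $j\in\mathcal{S}$ is $\mathcal{C}^2$ on a neighbourhood of $x_j^\star$, so choosing $\mathcal{U}$ as a product of these coordinate neighbourhoods makes $g|_{\mathcal{M}\cap\mathcal{U}}$ a finite sum of $\mathcal{C}^2$ functions of the free coordinates plus a constant, hence $\mathcal{C}^2$.

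The Sharpness property is where the real computation lies, and I expect it to be the main obstacle. By separability $\partial g(x^\star) = \prod_{j=1}^p \partial g_j(x_j^\star)$. For $j\in\mathcal{S}$ differentiability gives the singleton $\{g_j'(x_j^\star)\}$, whose affine hull is a point; for $j\in\mathcal{S}^c$ the set $\partial g_j(x_j^\star)$ is by definition not a singleton, hence a convex subset of $\mathbb{R}$ containing at least two points, so its affine hull is all of $\mathbb{R}$. Taking the product, the subspace parallel to $\text{aff}(\partial g(x^\star))$ is exactly $\{v : v_j = 0 \text{ for } j\in\mathcal{S}\}$, and its orthogonal complement, which is $\text{Lin}(\partial g(x^\star))^\perp$ in the notation of \Cref{def:partly_smooth}, is precisely $T_{x^\star}$. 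The delicate points to handle here are that $\partial g_j(x_j^\star)$ must be nonempty for the affine hull to be well defined (guaranteed in the application through \Cref{ass:non_degeneracy}, which places $-\nabla f(x^\star)$ in $\text{ri}(\partial g(x^\star))$), and that a non-singleton convex subset of the line necessarily has full affine hull.

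Finally, for the Continuity property I would again use separability together with the manifold restriction. For $j\in\mathcal{S}$, $g_j$ is $\mathcal{C}^2$ near $x_j^\star$, so $x_j \mapsto \partial g_j(x_j) = \{g_j'(x_j)\}$ is single-valued and continuous; for $j\in\mathcal{S}^c$, points of $\mathcal{M}$ keep $x_j = x_j^\star$ fixed, so $\partial g_j(x_j)$ is constant along $\mathcal{M}$. Hence the set-valued map $x\mapsto\partial g(x)$, being a product of continuous and constant maps, is continuous at $x^\star$ relative to $\mathcal{M}$. This establishes all three axioms and completes the proof; the only genuinely nontrivial step is the sharpness identification, where one must carefully track which coordinate subdifferentials are singletons and which span their full coordinate line.
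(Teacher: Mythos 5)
Your proof is correct and takes essentially the same route as the paper's: both verify the three axioms of \Cref{def:partly_smooth} coordinatewise, with the same product-neighbourhood argument for smoothness, the same separability computation of $\mathrm{aff}(\partial g(x^\star))$ (singleton factors on $\mathcal{S}$, full lines on $\mathcal{S}^c$) for sharpness, and the same continuity argument (single-valued continuous on $\mathcal{S}$, frozen coordinates on $\mathcal{S}^c$ along the manifold). Your explicit remark that $\partial g_j(x_j^\star)$ must be nonempty, guaranteed via \Cref{ass:non_degeneracy}, is a point the paper leaves implicit.
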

\begin{proof}
    We need to prove the three properties of the partial smoothness (\Cref{def:partly_smooth}).

    \textbf{Smoothness.}
    Let us write $\mathcal{M}_{x^\star} = x^\star + T_{x^\star}$ the affine space directed by the model subspace and pointed by $x^\star$. In particular, it is a $C^2$-manifold.

    For every $j \in \gsupp_{x^{\star}}$, $g_j$ is locally $C^2$ around $x^\star_j$, hence there exists a neighborhood $U_j$ of $x^\star_j$ such that the restriction of $f$ to $U$ is twice continuously differentiable. For $j \in \gsupp_{x^\star}^c$, let's write $U_j = \mathbb{R}$.
    Take $U = \bigotimes_{j \in [p]} U_j$. This a neighborhood of $x^\star$ (it is open, and contains $x^\star$).
    Consider the restriction $g_{|\mathcal{M}_{x^\star}}$ of $g$ to $\mathcal{M}_{x^\star}$.
    It is $C^2$ at each point of $U$ since each coordinates (for $j \in \gsupp_{x^\star}$) are $C^2$ around $U_j$.

    \textbf{Sharpness.}
    Since $g$ is completly separable, we have that
    \begin{math}
        \partial g(x^\star) =
        \partial g_1(x^\star_1) \times \hdots \times \partial g_p(x^\star_p)
    \end{math}.
    Note that $\partial g_j(x^\star_j)$ is a set valued mapping which is equal to the singleton $\{\nabla_j g(x^\star_j)\}$ if $g_j$ is differentiable at $x^\star_j$ or it is equal to an interval.
    The model tangent space $T_{x^\star}$ of $g$ at $x^\star$ is given by
    \begin{align}
        T_{x^\star} = \text{span}(\partial g(x^\star))^{\perp}
        \quad \text{where} \quad
        \text{span}(\partial g(x^\star)) = \text{aff}(\partial g(x^\star)) - e_{x^\star}
        \enspace ,
    \end{align}
    with
    \begin{align}
        e_{x^\star} = \argmin_{e\in \text{aff}(\partial g(x^{\star}))} ||e|| \enspace,
    \end{align}
    called the model vector.

    In the particular case of separable functions, we have that
    \begin{align*}
        \text{aff}\left (\partial g(x^\star)\right )& = \text{aff}\left (\partial g_1(x^\star_1) \times \hdots \times \partial g_p(x^\star_p) \right ) = \text{aff}\left( \partial g_1(x^\star_1)\right) \times \hdots \times \text{aff}\left( \partial g_p(x^\star_p)\right)
        \enspace .
        \nonumber
    \end{align*}
    In this case,
    \begin{align}
        \text{aff}\left( \partial g_j(x_j^\star)\right) =
        \begin{cases}
            \{\nabla_j g(x^\star_j)\} & \text{ if } j \in \mathcal{S}_{x^{\star}}\\
            \mathbb{R} & \text{ otherwise}
        \end{cases}
\quad \text{and} \quad
        e_{x^\star_j} =
        \begin{cases}
            \nabla_j g(x^\star_j) & \text{ if } j \in \mathcal{S}_{x^{\star}}\\
            0 & \text{ otherwise} \enspace .
        \end{cases}
    \end{align}
    Thus we have that
    \begin{align*}
        \text{span}\left(\partial g(x^\star)\right)
        &= \text{aff}\left( \partial g(x^\star)\right) - e_{x^\star} 
        = \{ x\in \mathbb{R}^{p}: \forall j' \in \cS_{x^\star}, x_{j'} = 0\} \enspace .
    \end{align*}
    Then
    \begin{align}
        T_{x^\star} = \text{span}\left( \partial g(x^\star)\right)^{\perp}
        =
        \{x\in \mathbb{R}^{p}: \forall j' \in \mathcal{S}_{x^\star}^c, x_{j'} = 0\}
        \enspace .
    \end{align}

    \textbf{Continuity.}
    We are going to prove that $\partial g$ is inner semicontinuous at $x^\star$ relative to $\mathcal{M}_{x^\star}$, \ie that for any sequence $(x^{(k)})$ of elements of $\mathcal{M}_{x^\star}$ converging to $x^\star$ and any $\bar{\eta} \in \partial g(x^\star)$, there exists a sequence of subgradients $\eta^{(k)} \in \partial g(x^{(k)})$ converging to $\bar{\eta}$.

    Let $x^{(k)}$ be a sequence of elements of $\mathcal{M}_{x^\star}$ converging to $x^\star$, or equivalently, let $t^{(k)}$ be a sequence of elements of $T_{x^\star}$ converging to $0$, and let $\bar \eta \in \partial g(x^\star)$.

    For $j \in \gsupp_{x^\star}$,
    we choose $\eta_j^{(k)} \eqdef g_j'(x^\star_j + t^{(k)}_j)$,
    using the smoothness property we have $\eta_j^{(k)} \eqdef \bar \eta_j$.
    For all $j \in  \gsupp_{x^\star}^c$ $x_j^{(k)} = x_j^\star$ we choose
    $\eta_j^{(k)} \eqdef \bar \eta_j$,
    since $x^{(k)} \in \cM_{x^\star}$, we have $\eta_j^{(k)}\in \partial g(x^{(k)})$.

    We have that $\eta^{(k)} \in \partial g(x^k)$ and $\eta^{(k)}$ converges towards $\bar \eta$ since $g_j'$ is $C^1$ around $x^\star_j$ for $j \in \gsupp_{x^\star}$, hence, $g_j'(x^\star_j + t^{(k)}_j)$ converges to $g_j'(x^\star_j) = \bar \eta_j$.
    Thus, it proves that $g$ is partly smooth at $x^{\star}$ relative to $x^{\star} + T_{x^{\star}}$.
\end{proof}
The end of the proof of \Cref{thm:finite_identification} is contained in~\Cref{sec:identification}.
%
\section{Proofs of local linear convergence (\Cref{thm:local_linear})}\label{app:proofs_local_lin_conv}
To simplify the notations in this section, $\mathcal{S} \eqdef \mathcal{S}_{x^\star}$.
Let us also write the element of $\mathcal{S}$ as follows:
\begin{math}
    \mathcal{S} = \{j_1, \hdots, j_{|\cS|}\}
\end{math}.
The first point of this proof is to write the CD algorithm as a fixed point iteration.
A full epoch of CD can be written as
\begin{equation}
x^{(k+1)} = \psi(x^{(k)}) = \mathcal{P}_p \circ \hdots \circ \mathcal{P}_1 (x^{(k)})\enspace .
\end{equation}
We also define $\pi^{x_{\cS^c}^\star} : \bbR^{|\cS|} \to \bbR^p$ for all $x_{\cS} \in \bbR^{|\cS|}$ and all $j \in \cS$ by
\begin{align}
    \left( \pi^{x_{\cS^c}^\star}(x_{\cS}) \right)_j =
    \begin{cases}
        x_j       & \text{if } j \in \cS \\
        x_j^\star & \text{if } j \in \cS^c \enspace,
    \end{cases}
\end{align}
and for all $j_s \in \cS$, $\tilde{\cP}_{j_s}^{x_{\cS^c}^\star} : \bbR^{|\cS|} \to \bbR^{|\cS|}$ is the function defined for all $x_{\cS} \in \bbR^{|\cS|}$ and all $j \in \cS$ by
\begin{align}
    \left( \tilde{\cP}_{j_s}^{x_{\cS^c}^\star} (x_{\cS}) \right)_j
    =
    \begin{cases}
        x_j &
        \text{if } j \neq j_s \\
        \prox_{\gamma_j g_j}
        \left( x_{j_s} - \gamma_{j_s}\nabla_{j_s} f(\pi^{x_{\cS^c}^\star}(x_{\cS})) \right) &
        \text{if } j = j_s \enspace.
    \end{cases}
\end{align}
%
Once the model is identified (\Cref{thm:finite_identification}), we have that there exists $K\geq 0$ such that for all $k\geq K$, we have that
\begin{align}
    x^{(k)}_{\mathcal{S}^c} = x^{\star}_{\mathcal{S}^{c}}
    \quad \text{and} \quad
    x^{(k+1)}_\mathcal{S}
    = \tilde{\psi} ( x^{(k)}_\mathcal{S})
    & \eqdef \mathcal{P}_{j_{|\cS|}}^{x^\star_{\mathcal{S}^c} } \circ \hdots \circ \mathcal{P}_{j_1}^{x^\star_{\mathcal{S}^c} }(x^{(k)}_\mathcal{S})
    \enspace .
\end{align}
When no confusion is possible, we denote by $\tilde{\mathcal{P}_j}$ the function $\tilde{\cP}_{j_s}^{x_{\cS^c}^\star}$, hence still dependant on $x_{\cS^c}^\star$.
The following lemma shows that $\tilde{\mathcal{P}_j}$ is differentiable at the optimum.
\begin{lemma}\label{lemma:diff_prox}
    For all $j\in \mathcal{S}$, $\tilde{\mathcal{P}_j}$ is differentiable at $x^{\star}_{\mathcal{S}}$.
\end{lemma}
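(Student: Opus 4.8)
The plan is to decompose $\tilde{\cP}_{j_s}^{x_{\cS^c}^\star}$ into an inner smooth map composed with an outer scalar proximity operator, and to establish differentiability of each factor. On every coordinate $j \neq j_s$ the map $\tilde{\cP}_{j_s}^{x_{\cS^c}^\star}$ acts as the identity, which is trivially differentiable, so it suffices to treat the $j_s$-th coordinate. There the map reads $x_{\cS} \mapsto \prox_{\gamma_{j_s} g_{j_s}}\big(h(x_{\cS})\big)$ with $h(x_{\cS}) \eqdef x_{j_s} - \gamma_{j_s}\nabla_{j_s} f(\pi^{x_{\cS^c}^\star}(x_{\cS}))$. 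Since $\pi^{x_{\cS^c}^\star}$ is affine and $f$ is locally $\mathcal{C}^2$ around $x^\star$ by \Cref{ass:locally_c2}, the inner map $h$ is $\mathcal{C}^1$ in a neighbourhood of $x_{\cS}^\star$. By the chain rule it will then remain to show that $\prox_{\gamma_{j_s} g_{j_s}}$ is differentiable at the point $z^\star \eqdef h(x_{\cS}^\star) = x_{j_s}^\star - \gamma_{j_s}\nabla_{j_s} f(x^\star)$.

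The core of the argument is this last differentiability of the scalar proximity operator, which is where the hypothesis $j_s \in \cS$ enters. First I would check that $x_{j_s}^\star = \prox_{\gamma_{j_s} g_{j_s}}(z^\star)$: the proximal characterisation $(z^\star - x_{j_s}^\star)/\gamma_{j_s} \in \partial g_{j_s}(x_{j_s}^\star)$ reduces to $-\nabla_{j_s} f(x^\star) \in \partial g_{j_s}(x_{j_s}^\star)$, which is precisely the coordinatewise form of the first-order optimality condition $-\nabla f(x^\star) \in \partial g(x^\star)$ valid at any minimizer (and separable in $j$). Because $j_s \in \cS = \cS_{x^\star}$, \Cref{ass:locally_c2} guarantees that $g_{j_s}$ is $\mathcal{C}^2$ on some neighbourhood $V$ of $x_{j_s}^\star$; in particular $\partial g_{j_s}$ is the single-valued smooth map $g_{j_s}'$ there.

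On $V$ the proximal point $u = \prox_{\gamma_{j_s} g_{j_s}}(z)$ is characterised by the equation $z = F(u)$ with $F(u) \eqdef u + \gamma_{j_s} g_{j_s}'(u)$, where $F$ is $\mathcal{C}^1$ and $F'(u) = 1 + \gamma_{j_s} g_{j_s}''(u) \geq 1 > 0$ by convexity of $g_{j_s}$. The inverse function theorem then makes $F^{-1}$ a $\mathcal{C}^1$ map near $z^\star = F(x_{j_s}^\star)$. The point to control is that this local inverse genuinely coincides with the prox: since $\prox_{\gamma_{j_s} g_{j_s}}$ is firmly nonexpansive, hence $1$-Lipschitz and continuous, for $z$ close enough to $z^\star$ its image stays inside $V$, where $F\big(\prox_{\gamma_{j_s} g_{j_s}}(z)\big) = z$ forces $\prox_{\gamma_{j_s} g_{j_s}}(z) = F^{-1}(z)$. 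Hence $\prox_{\gamma_{j_s} g_{j_s}}$ is $\mathcal{C}^1$, and in particular differentiable, at $z^\star$.

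Combining the three observations through the chain rule yields differentiability of $\tilde{\cP}_{j_s}^{x_{\cS^c}^\star}$ at $x_{\cS}^\star$. I expect the main obstacle to be the clean handling of this confinement step: one must invoke nonexpansiveness of the prox to guarantee that its image remains inside the $\mathcal{C}^2$ region $V$ before the inverse function theorem can be applied, because $g_{j_s}$ is assumed $\mathcal{C}^2$ only locally and not globally.
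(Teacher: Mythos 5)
Your proof is correct and follows essentially the same route as the paper: both reduce the matter to differentiability of the scalar proximity operator at $z^\star = x_{j_s}^\star - \gamma_{j_s}\nabla_{j_s} f(x^\star)$ and invert the $\mathcal{C}^1$ equation $z = u + \gamma_{j_s} g_{j_s}'(u)$ near $x_{j_s}^\star$ (the paper via the implicit function theorem applied to $F(v,u) = v - u - \gamma_{j_s} g_{j_s}'(u)$, you via the inverse function theorem, which is the same argument here). Your explicit confinement step --- using continuity (nonexpansiveness) of the prox to guarantee that $\prox_{\gamma_{j_s} g_{j_s}}(z)$ stays inside the $\mathcal{C}^2$ neighbourhood for $z$ near $z^\star$, so that the local inverse genuinely coincides with the prox --- makes rigorous a point the paper passes over when it simply asserts that ``the map $h$ coincides with $\prox_{\gamma_j g_j}$ on $\mathcal{V}$''.
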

\begin{proof}

    From \Cref{ass:locally_c2}, we know there exists a neighboorhood of $x^{\star}_j$ denoted $\mathcal{U}$ such that, for $j\in \mathcal{S}$, the restriction of $g_j$ to $\mathcal{U}$ is $\mathcal{C}^{2}$ on $\mathcal{U}$.
    In particular, it means that $x_j^\star$ is a differentiable point of $g_j$ and given a pair $(u,v) \in \mathcal{U} \times \bbR^p$ such that
    \begin{equation} \label{eq:id_prox}
        u = \prox_{\gamma_jg_j}(v) \in \mathcal{U}
        \enspace ,
    \end{equation}
    we have $\frac{1}{\gamma_j}(v-u)\in \partial g_j(u)$ becomes
    \begin{align}
        \frac{1}{\gamma_j}(v - u) & = g_j'(u) \Leftrightarrow
        v  = u + \gamma_j g_j'(u) \Leftrightarrow
        v  = (\text{Id} + g_j')(u) \enspace .
    \end{align}
    Let $H(u) = (\text{Id} + g_j')(u)$, since $g_j$ is twice differentiable at $u$, we have that
    \begin{align}
        H'(u) = 1 + \gamma_j g_j''(u) \enspace .
    \end{align}
    Thus, $H': \mathcal{U} \mapsto \mathbb{R}$ is continuous and then $H: \mathcal{U} \mapsto \mathbb{R}$ is continuously differentiable.
    Hence $F(v, u) \eqdef v - H(u)$ is $\mathcal{C}^{1}$ and $F(v, u) = 0$.
    By convexity of $g$, we have $g_j''(u) \geq 0$ and
    \begin{align}
        \frac{\partial F}{\partial u}(v, u)
        =
        - H'(u)
        =
        - 1 - \gamma_j g''(u) \neq 0
        \enspace.
    \end{align}
    Using the implicit functions theorem, we have that there exists an open interval $\mathcal{V}\subseteq \mathbb{R}$ with $v \in \mathcal{V}$ and a function $h: \mathcal{V} \mapsto \mathbb{R}$ which is $\mathcal{C}^{1}$ such as $u = h(v)$.

    Using \eqref{eq:id_prox} we thus have with the choice $u=x_j^\star$, $v=x_j^\star - \gamma_j \nabla_j f(x^\star)$ that the map $h$ coincides with $\prox_{\gamma_jg_j}$ on $\mathcal{V}$ and
    is differentiable at $v=x_j^\star - \gamma_j \nabla_j f(x^\star) \in \cV$.
    It follows that $\tilde{\cP}_{j}$ is differentiable at $x_{\cS}^\star$.
\end{proof}
For the sake of completness, we show that in fact $\prox_{\gamma_j g_j}$ is also differentiable on the complement of the generalized support at $x^\star_j-\nabla_j f(x^\star)$.
\begin{lemma}
    For all $j \in \cS^c$, $\prox_{\gamma_j g_j}$ is constant around $x^\star_j-\nabla_j f(x^\star)$. Moreover, the map $x \mapsto \prox_{\gamma_j g_j}(x_j-\nabla_j f(x))$ is differentiable at $x^\star$ with gradient $0$.
\end{lemma}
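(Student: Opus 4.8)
The plan is to leverage the non-degeneracy \Cref{ass:non_degeneracy} to show that $x_j^\star$ is attained by $\prox_{\gamma_j g_j}$ on a whole neighborhood of $v^\star \eqdef x_j^\star - \gamma_j \nabla_j f(x^\star)$, from which both claims follow at once. First I would unpack the non-degeneracy assumption: since $g$ is separable, $\partial g(x^\star) = \partial g_1(x_1^\star) \times \hdots \times \partial g_p(x_p^\star)$ and the relative interior factorizes coordinatewise, so \Cref{ass:non_degeneracy} yields $-\nabla_j f(x^\star) \in \text{ri}\bigl(\partial g_j(x_j^\star)\bigr)$ for every $j$. For $j \in \cS^c$ the subdifferential $J \eqdef \partial g_j(x_j^\star)$ is, by \Cref{def:gsupp}, not a singleton, hence a non-degenerate (possibly unbounded) interval, and $\text{ri}(J)$ is an open interval containing $-\nabla_j f(x^\star)$.

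The key step is to characterize the preimage of $x_j^\star$ under the resolvent. Using the identity $u = \prox_{\gamma_j g_j}(v) \iff v \in u + \gamma_j \partial g_j(u)$, the set of $v$ mapped to $x_j^\star$ is exactly $x_j^\star + \gamma_j J$, again a non-degenerate interval whose relative interior is $x_j^\star + \gamma_j \,\text{ri}(J)$. Since $v^\star = x_j^\star + \gamma_j\bigl(-\nabla_j f(x^\star)\bigr)$ with $-\nabla_j f(x^\star) \in \text{ri}(J)$, the point $v^\star$ lies in the interior of this preimage interval; hence $\prox_{\gamma_j g_j}$ is constant, equal to $x_j^\star$, on a whole neighborhood $\mathcal{V}$ of $v^\star$, which is the first claim.

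For the second claim I would compose with the inner map. The map $x \mapsto x_j - \gamma_j \nabla_j f(x)$ is continuous at $x^\star$ (indeed $\mathcal{C}^1$, since $f$ is locally $\mathcal{C}^2$ by \Cref{ass:locally_c2}) and sends $x^\star$ to $v^\star \in \mathcal{V}$; by continuity it maps an entire neighborhood of $x^\star$ into $\mathcal{V}$. On that neighborhood the composition $x \mapsto \prox_{\gamma_j g_j}(x_j - \gamma_j \nabla_j f(x))$ is therefore identically equal to $x_j^\star$, i.e. locally constant, so it is differentiable at $x^\star$ with vanishing gradient.

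The only delicate point is the bridge between \Cref{ass:non_degeneracy} and local constancy: recognizing that strict membership in the relative interior of the interval $\partial g_j(x_j^\star)$ translates, through the resolvent identity, into $v^\star$ lying in the open interval of preimages of $x_j^\star$. Everything else is elementary, namely the observation that a locally constant map has zero derivative. The separability of $g$, which makes the relative interior decompose coordinatewise, is what keeps this argument clean.
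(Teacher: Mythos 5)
Your proof is correct and follows essentially the same route as the paper's: both use the non-degeneracy assumption (factored coordinatewise by separability of $g$) to place $v^\star = x_j^\star - \gamma_j \nabla_j f(x^\star)$ strictly inside the interval $x_j^\star + \gamma_j\,\partial g_j(x_j^\star)$ of points whose prox is $x_j^\star$, giving local constancy, and then conclude zero gradient for the composition via local smoothness of $f$. Your version is in fact slightly more careful than the paper's, since you explicitly justify the coordinatewise factorization of the relative interior and correctly include the factor $\gamma_j$ that the paper's statement of the lemma omits.
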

\begin{proof}
    Let	$\partial g_j (x^\star_j)= [a;b]$ and
    let $z_j^\star = x^\star_j-\nabla_j f(x^\star)$, then combining the fixed point equation and \Cref{ass:non_degeneracy} leads to:
    \begin{align}
        \frac{1}{\gamma_j}(z_j^\star -x_j^\star)\in \text{ri}\left (\partial g_j(x^\star_j)\right )=]a;b[ \enspace .
    \end{align}
    Thus,
    \begin{align}
        z_j^\star \in ]\gamma_j a + x^\star_j; \gamma_j b + x_j^\star[ \enspace .
    \end{align}
    For all $v\in ]\gamma_j a + x^\star_j; \gamma_j b + x_j^\star[ $, we have
    \begin{math}
        \frac{1}{\gamma_j}(v - x_j^\star)\in ]a;b[= \text{ri}\left (\partial g_j(x^\star_j)\right )
    \end{math},
    \ie $\prox_{\gamma_j g_j}(v) = x^\star_j$.
    As $f$ is $\mathcal{C}^2$ in $x^\star$, we have that $x\mapsto \prox_{\gamma_j g_j}(x_j-\nabla_j f(x))$ is differentiable at $x^\star$ with gradient being $0$.
\end{proof}

From \Cref{lemma:diff_prox}, we have that $\tilde{\mathcal{P}_j}$ is differentiable at $x^{\star}_\mathcal{S}$ for all $j\in \mathcal{S}$.
Since $x^{\star}$ is an optimal point, the following fixed points equation holds:
\begin{align}
    x^{\star}_j = \prox_{\gamma_j g_j}\left(x^{\star}_j - \gamma_j \nabla_j f(x^{\star})\right) \enspace .
\end{align}
The map $\tilde{\psi}$ is then differentiable at $x^{\star}_\mathcal{S}$ since it is obtained as the composition of differentiable functions and that each function $\tilde{\mathcal{P}_j}$ is evaluated at a differentiable point (only one coordinate change at each step).

To compute, the Jacobian of $\tilde{\mathcal{P}_j}$ at $x^{\star}_\mathcal{S}$, let us first notice that

\begin{align}
    \cJ\mathcal{P}_j(x^{\star}_\mathcal{S})^\top =
    \left (
        \begin{array}{c|c|c|c|c|c|c}
            e_1 & \hdots & e_{j-1} & v_j & e_{j+1} & \hdots & e_s
        \end{array}
    \right) \enspace ,
\end{align}
where $v_j = \partial_x \prox_{\gamma_j g_j}\left(z_j^{\star}\right)\left(e_j - \gamma_j \nabla^{2}_{j,:}f(x^{\star})\right)$ and
$z_j^{\star} = x^{\star}_j - \gamma_j \nabla_j f(x^{\star})$.
This matrix can be rewritten as
\begin{align}
    \cJ \tilde{\mathcal{P}_j}(x^{\star}_\mathcal{S})
    &= \Id_{|S|} - e_je_j^{\top} + \partial_x \prox_{\gamma_j g_j}
        \left(z_j^{\star}\right)
        \left (
            e_j e_j^\top - \gamma_j e_j e_j^\top \nabla^{2}f(x^{\star})
        \right )
        \nonumber \\
    &= \Id_{|S|} - e_je_j^{\top} \gamma_j \partial_x \prox_{\gamma_j g_j}
        \left(z_j^{\star}\right)
        \left (
            \diag(u) + \nabla^{2}f(x^{\star})
        \right)
        \nonumber \\
    &= \Id_{|S|} - e_je_j^{\top} \gamma_j \partial_x \prox_{\gamma_j g_j}
        \left(z_j^{\star}\right) M
        \nonumber \\
    &= M^{-1/2}
    \left(\Id_{|S|} - M^{1/2} e_j e_j^{\top} \gamma_j \partial_x \prox_{\gamma_j g_j}\left(z_j^{\star}\right) M^{1/2}\right)
    M^{1/2}
    \nonumber \\
    & = M^{-1/2}
        \left(
            \Id_{|S|} - B_j
        \right)M^{1/2}
        \enspace ,
\end{align}
where
\begin{equation}\label{eq:def-mat-M}
    M \eqdef
    \nabla_{\cS, \cS}^{2} f(x^\star)
    + \diag
    \left ( u \right ) \enspace,
\end{equation}
and $ u \in \bbR^{|\cS|} $ is defined for all $j \in \cS$ by
\begin{equation}
    u_j =
    \begin{cases}
        \frac{1}{\gamma_j\partial_{x}\prox_{\gamma_j g_j}(z_j^\star)} - \frac{1}{\gamma_j}
        & \text{if } \prox_{\gamma_j g_j}(z_j^\star) \neq 0 \\
        0 & \text{otherwise,}
    \end{cases}
\end{equation}
and
\begin{equation}\label{eq:def-mat-Bj}
    B_j = M^{1/2}_{:,j}\gamma_j \partial_x \prox_{\gamma_j g_j}\left(z_j^{\star}\right) M_{:,j}^{1/2\top} .
\end{equation}

Since only one coordinate change at each step, the chain rule leads to
\begin{align}
    \cJ\tilde{\psi}(x^{\star}_{\mathcal{S}}) & = \cJ\mathcal{P}_{j_s}(x^{\star}_{\mathcal{S}})\cJ\mathcal{P}_{j_{s-1}}(x^{\star}_{\mathcal{S}})\hdots \cJ\mathcal{P}_{j_1}(x^{\star}_{\mathcal{S}}) \nonumber \\
                 & =  M^{-1/2}\underbrace{\left(\text{Id} - B_{j_s}\right) \hdots (\text{Id} - B_{j_1})}_{A} M^{1/2} \nonumber
\end{align}
The next series of lemma will be usefull to prove that the spectral radius $\rho\left(\cJ\tilde{\psi}(x^{\star}_\mathcal{S})\right)<1$.
\begin{lemma}\label{lemma:M_sym_def_pos}
    The matrix $M$ defined in~\eqref{eq:def-mat-M} is symmetric definite positive.
\end{lemma}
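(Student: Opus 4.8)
The plan is to write $M = \nabla^2_{\cS,\cS} f(x^\star) + \diag(u)$ as the sum of a symmetric positive definite matrix and a positive semidefinite diagonal correction. Symmetry of $M$ is immediate: under \Cref{ass:locally_c2} the block $\nabla^2_{\cS,\cS} f(x^\star)$ is a restriction of a Hessian of a locally $\mathcal{C}^2$ function, hence symmetric, and $\diag(u)$ is symmetric by construction. So the only real content is definiteness, and this will follow once I show that $u_j \geq 0$ for every $j \in \cS$: then $\diag(u) \succeq 0$, and by restricted injectivity (\Cref{ass:restricted_injectivity}) $M$ is the sum of a positive definite matrix $\nabla^2_{\cS,\cS} f(x^\star) \succ 0$ and a positive semidefinite one, whence $M \succ 0$.

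The crux is therefore to evaluate $u_j$ explicitly, which reduces to making the derivative of the proximity operator concrete. Reusing the local inverse constructed in the proof of \Cref{lemma:diff_prox}, the map $H(u) = u + \gamma_j g_j'(u)$ is a $\mathcal{C}^1$ inverse of $\prox_{\gamma_j g_j}$ in a neighbourhood of $z_j^\star = x_j^\star - \gamma_j \nabla_j f(x^\star)$, so the inverse function theorem gives
\begin{align}
    \partial_x \prox_{\gamma_j g_j}(z_j^\star)
    = \frac{1}{H'(x_j^\star)}
    = \frac{1}{1 + \gamma_j g_j''(x_j^\star)} \enspace .
\end{align}
Substituting this into the nondegenerate branch $\prox_{\gamma_j g_j}(z_j^\star) \neq 0$ of the definition of $u_j$ yields
\begin{align}
    u_j
    = \frac{1}{\gamma_j \, \partial_x \prox_{\gamma_j g_j}(z_j^\star)} - \frac{1}{\gamma_j}
    = \frac{1 + \gamma_j g_j''(x_j^\star)}{\gamma_j} - \frac{1}{\gamma_j}
    = g_j''(x_j^\star) \enspace .
\end{align}
Convexity of $g_j$ (\Cref{ass:closed_proper_conv}) forces $g_j''(x_j^\star) \geq 0$, and in the remaining branch $u_j = 0$ by definition; hence $u_j \geq 0$ in all cases, which is exactly what the decomposition above requires to conclude $M \succ 0$.

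The step I expect to demand the most care is the identification $\partial_x \prox_{\gamma_j g_j}(z_j^\star) = 1/(1 + \gamma_j g_j''(x_j^\star))$, rather than the subsequent algebra. One must verify that $z_j^\star$ actually lands in the interval $\mathcal{V}$ on which the local inverse $h$ from \Cref{lemma:diff_prox} coincides with $\prox_{\gamma_j g_j}$ and is differentiable, and that $x_j^\star$ is a point where $g_j''$ is well defined; both are guaranteed by \Cref{ass:locally_c2} together with the fixed point identity $x_j^\star = \prox_{\gamma_j g_j}(z_j^\star)$, but it is this bookkeeping that makes the argument rigorous rather than merely formal.
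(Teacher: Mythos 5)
Your proof is correct and follows the same overall decomposition as the paper: symmetry is immediate, $\diag(u) \succeq 0$, and restricted injectivity (\Cref{ass:restricted_injectivity}) gives $\nabla^2_{\cS,\cS} f(x^\star) \succ 0$, so $M$ is positive definite as the sum of a positive definite and a positive semidefinite matrix. The one place you diverge is the justification that $u_j \geq 0$. The paper dispatches this abstractly: by nonexpansivity of the proximal operator, $0 < \partial_x \prox_{\gamma_j g_j}(z_j^\star) \leq 1$, hence $\tfrac{1}{\gamma_j \partial_x \prox_{\gamma_j g_j}(z_j^\star)} \geq \tfrac{1}{\gamma_j}$ and $u_j \geq 0$. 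You instead reuse the implicit-function-theorem construction from \Cref{lemma:diff_prox} to obtain the explicit value $\partial_x \prox_{\gamma_j g_j}(z_j^\star) = 1/(1 + \gamma_j g_j''(x_j^\star))$, whence $u_j = g_j''(x_j^\star) \geq 0$ by convexity. Both are valid, and your bookkeeping (that $z_j^\star$ lies in the interval $\mathcal{V}$ where the local inverse coincides with the prox, via the fixed point identity) is exactly what is needed. Your longer route buys two things: it substantiates the strict positivity $\partial_x \prox_{\gamma_j g_j}(z_j^\star) > 0$, which the paper asserts without argument, and it identifies $M$ as $\nabla^2_{\cS,\cS} f(x^\star) + \diag\bigl((g_j''(x_j^\star))_{j\in\cS}\bigr)$, \ie the Hessian at $x^\star_{\cS}$ of the smooth restriction of $\Phi$ to the model subspace, which makes the positive definiteness of $M$ transparent. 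The paper's route is shorter and needs only the standard fact that proximal operators of convex functions are nonexpansive, without any explicit derivative formula.
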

\begin{proof}
Using the non-expansivity of the prox,
 and the property $\partial_{x}\prox_{\gamma_j g_j}(z_j^\star) > 0$ for $j \in \cS$,
$\diag(u)$ is a symmetric semidefinite matrix,
so $M$ is a sum of a symmetric definite positive matrix and a symmetric semidefinite matrix, hence $M$ is symmetric definite positive.
\end{proof}

\begin{lemma}
   For all $j\in \mathcal{S}$, the matrix $B_j$ defined in~\eqref{eq:def-mat-Bj} has spectral norm bounded by 1, \ie $\norm{B_j}_2 \leq 1$.
\end{lemma}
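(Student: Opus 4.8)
The plan is to exploit the rank-one structure of $B_j$. Writing $m_j \eqdef M^{1/2}_{:,j}$ for the $j$-th column of $M^{1/2}$ and $p_j \eqdef \partial_x \prox_{\gamma_j g_j}(z_j^\star)$, the defining identity~\eqref{eq:def-mat-Bj} reads $B_j = \gamma_j p_j\, m_j m_j^\top$, since $\gamma_j p_j$ is a scalar. By \Cref{lemma:M_sym_def_pos}, $M$ is symmetric positive definite, so its principal square root $M^{1/2}$ is symmetric positive definite and $m_j m_j^\top \succeq 0$; moreover, from the expression $p_j = 1/(1 + \gamma_j g_j''(x_j^\star))$ obtained in the proof of \Cref{lemma:diff_prox} (equivalently, by non-expansiveness of the prox) together with $g_j'' \geq 0$ by convexity, we get $0 < p_j \leq 1$. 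Hence $B_j$ is symmetric positive semidefinite of rank one, and its spectral norm equals its unique nonzero eigenvalue, $\norm{B_j}_2 = \gamma_j p_j \norm{m_j}^2$.

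First I would evaluate $\norm{m_j}^2$. Using symmetry of $M^{1/2}$ and $(M^{1/2})^2 = M$, one has $\norm{m_j}^2 = e_j^\top M^{1/2} M^{1/2} e_j = M_{jj}$. Substituting the definition~\eqref{eq:def-mat-M}, namely $M_{jj} = \nabla^2_{j,j} f(x^\star) + u_j$ with $u_j = \tfrac{1}{\gamma_j p_j} - \tfrac{1}{\gamma_j}$ in the generic case $\prox_{\gamma_j g_j}(z_j^\star) \neq 0$, and expanding, the two prox-dependent terms telescope and yield $\norm{B_j}_2 = \gamma_j p_j \nabla^2_{j,j} f(x^\star) + 1 - p_j = 1 + p_j\bigl(\gamma_j \nabla^2_{j,j} f(x^\star) - 1\bigr)$. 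In the degenerate case $u_j = 0$ the bound is even easier, since then $\norm{B_j}_2 = \gamma_j p_j \nabla^2_{j,j} f(x^\star)$ and one concludes directly from the estimates below.

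It then remains to show the bracket is non-positive. Since $p_j > 0$, this reduces to proving $\gamma_j \nabla^2_{j,j} f(x^\star) \leq 1$. The main (though mild) obstacle is to control the diagonal Hessian entry by the coordinatewise Lipschitz constant: because $f$ is locally $\mathcal{C}^2$ around $x^\star$ (\Cref{ass:locally_c2}), dividing the coordinatewise Lipschitz inequality $|\nabla_j f(x + e_j h) - \nabla_j f(x)| \leq L_j |h|$ by $|h|$ and letting $h \to 0$ gives $\nabla^2_{j,j} f(x^\star) \leq L_j$. Combining this with the step-size constraint $\gamma_j \leq 1/L_j$ yields $\gamma_j \nabla^2_{j,j} f(x^\star) \leq \gamma_j L_j \leq 1$, whence $\norm{B_j}_2 \leq 1$, which concludes the argument.
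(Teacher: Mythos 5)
Your proof is correct and follows essentially the same route as the paper's: both compute the unique nonzero eigenvalue of the rank-one matrix $B_j$ as $\gamma_j \partial_x \prox_{\gamma_j g_j}(z_j^\star)\, M_{jj}$, expand $M_{jj} = \nabla^2_{j,j} f(x^\star) + u_j$ so the prox-dependent terms combine into $1 - \partial_x \prox_{\gamma_j g_j}(z_j^\star)$, and conclude via $\nabla^2_{j,j} f(x^\star) \leq L_j \leq 1/\gamma_j$. Your treatment is in fact slightly more careful than the paper's, since you explicitly handle the degenerate case $u_j = 0$ (where $\prox_{\gamma_j g_j}(z_j^\star) = 0$), which the paper's computation silently skips.
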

\begin{proof}
    $B_j$ is a rank one matrix which is the product of
    $\gamma_j\partial_{x}\prox_{\gamma_jg_j}(z^{\star}_{j}) M_{:, j}^{1/2}$ and $M_{:,j}^{1/2\top}$,
    its non-zeros eigenvalue is thus given by
    \begin{align}
        \normin{B_j}_2
        &=
        \left |
        M^{1/2\top}_{:,j}  \gamma_j\partial_{x}\prox_{\gamma_j g_j}(z_j^{\star})
         M_{:, j}^{1/2}
         \right |
        \nonumber \\
        &=
        \left |
        \gamma_j\partial_{x}\prox_{\gamma_j g_j}(z^{\star}_{j}) M_{j, j}
        \right |
        \nonumber \\
        & =
        \left |
        \gamma_j\partial_{x}\prox_{\gamma_j g_j}(z^{\star}_{j})
        \left (
            \underbrace{\nabla_{j, j}^{2} f(x^{\star})}_{0 \leq }
                +
                \underbrace{\left (
                \frac{1}{\gamma_j\partial_{x}\prox_{\gamma_j g_j}(z^{\star}_{j})}
                - \frac{1}{\gamma_j}
                \right )}_{0\leq }
        \right )
        \right | \enspace.
        \end{align}
        By positivity of the two terms,
        \begin{align}
            \normin{B_j}_2
            & =
            \gamma_j\partial_{x}\prox_{\gamma_j g_j}(z^{\star}_{j})
                \underbrace{
                    \nabla_{j, j}^{2} f(x^{\star})}_{\leq L_{j}\leq \frac{1}{\gamma_j}}
                    + \left (
                    1 - \partial_{x}\prox_{\gamma_j g_j}(z^{\star}_{j})
                \right ) \nonumber \\
            & \leq
                \partial_{x}\prox_{\gamma_j g_j}(z^{\star}_{j})
                + \left (
                1 - \partial_{x}\prox_{\gamma_j g_j}(z^{\star}_{j})
                \right ) \nonumber\\
            & \leq 1 \enspace.
            \end{align}
\end{proof}
\begin{lemma}
    For all $j\in \mathcal{S}$, $B_j / \normin{B_j}$ is an orthogonal projector onto $\Span(M_{:, j}^{1/2})$.
\end{lemma}
\begin{proof}
    It is clear that $B_{j} / ||B_{j}||$ is symmetric. We now prove that it is idempotent, \ie $(B_{s} / ||B_{s}||)^{2} = B_{s} / ||B_{s}||.$
    \begin{align}
        B_{j}^{2} / ||B_{j}||^{2} & = (\gamma_j\partial_{x}\prox_{\gamma_j g_j}(z^{\star}_{j}))^{2} M_{:, j}^{1/2}  M_{:,j}^{1/2 \top} M_{:, j}^{1/2}  M_{:,j}^{1/2 \top} / ||B_{j}||^{2} \nonumber \\
        & = (\gamma_j\partial_{x}\prox_{\gamma_j g_j}(z^{\star}_{j}))||B_{j}|| M_{:, j}^{1/2} M_{:,j}^{1/2 \top} / ||B_{j}||^{2} \nonumber \\
        & = B_{j} / ||B_j|| \nonumber \enspace .
    \end{align}
    Hence, $B_j / ||B_j||$ is an orthogonal projector.
\end{proof}
\begin{lemma}\label{lemma:ortho_xj}
    For all $j\in \mathcal{S}$  and for all $x \in \bbR^S$, if $\normin{(\Id - B_j)x} = \norm{x}$ then $x \in \Span(M_{:,j}^{1/2})^\perp$.
\end{lemma}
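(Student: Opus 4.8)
The plan is to leverage the three structural lemmas immediately preceding this statement: $B_j$ is a rank-one, symmetric, positive semidefinite matrix whose only nonzero eigenvalue equals its spectral norm $\normin{B_j}_2 \in (0,1]$, and $B_j / \normin{B_j}_2$ is the orthogonal projector onto $\Span(M_{:,j}^{1/2})$. Writing $\beta_j \eqdef \normin{B_j}_2$ and letting $P_j \eqdef B_j / \beta_j$ denote that projector, I would first record that $\beta_j > 0$: since $M$ is symmetric definite positive (\Cref{lemma:M_sym_def_pos}), the matrix $M^{1/2}$ is nonsingular and its column $M_{:,j}^{1/2}$ is nonzero, while $\gamma_j \partial_x \prox_{\gamma_j g_j}(z_j^\star) > 0$ on $\cS$; hence $B_j = \gamma_j \partial_x \prox_{\gamma_j g_j}(z_j^\star)\, M_{:,j}^{1/2} M_{:,j}^{1/2\top} \neq 0$. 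This gives $\beta_j \in (0,1]$ and the clean decomposition $B_j = \beta_j P_j$.

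Next I would split any $x \in \bbR^S$ orthogonally along the projector as $x = P_j x + (\Id - P_j) x$, the two summands being orthogonal. Because $B_j = \beta_j P_j$ acts as multiplication by $\beta_j$ on $\Span(M_{:,j}^{1/2})$ and by $0$ on its orthogonal complement, one obtains
\begin{equation}
    (\Id - B_j) x = (1 - \beta_j)\, P_j x + (\Id - P_j) x \enspace .
\end{equation}
Since these two terms are orthogonal, the Pythagorean identity yields
\begin{equation}
    \normin{(\Id - B_j) x}^2 = (1 - \beta_j)^2 \normin{P_j x}^2 + \normin{(\Id - P_j) x}^2 \enspace ,
\end{equation}
to be compared with $\normin{x}^2 = \normin{P_j x}^2 + \normin{(\Id - P_j) x}^2$.

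Finally, imposing the hypothesis $\normin{(\Id - B_j) x} = \normin{x}$ and subtracting the two displays gives $\bigl((1-\beta_j)^2 - 1\bigr) \normin{P_j x}^2 = 0$, that is $\beta_j(\beta_j - 2) \normin{P_j x}^2 = 0$. As $\beta_j \in (0,1]$, the factor $\beta_j(\beta_j - 2)$ is strictly negative, hence nonzero, so $\normin{P_j x} = 0$, i.e. $P_j x = 0$. This is exactly the assertion $x \in \Span(M_{:,j}^{1/2})^\perp$. The whole argument is a short linear-algebra computation once the scaled-projector structure of $B_j$ is available; the only delicate point is ensuring $\beta_j \neq 0$ (so that $P_j$ is well defined and $(1 - \beta_j)^2 < 1$ strictly), which is precisely where the positive-definiteness of $M$ is used.
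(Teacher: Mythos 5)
Your proof is correct, and it relies on exactly the same two structural facts as the paper's own argument --- $\normin{B_j}_2 \leq 1$ and $B_j/\normin{B_j}_2$ being the orthogonal projector onto $\Span(M_{:,j}^{1/2})$ --- but its logical shape is different. The paper proceeds by contraposition: it assumes $x \notin \Span(M_{:,j}^{1/2})^\perp$, writes $\Id - B_j = (1-\normin{B_j}_2)\Id + \normin{B_j}_2\left(\Id - B_j/\normin{B_j}_2\right)$, and concludes via the triangle inequality, the strictness coming from $\normin{x_{M_{:,j}^{1/2\perp}}} < \normin{x}$. You argue directly: splitting $x$ orthogonally along the projector $P_j \eqdef B_j/\beta_j$ (with $\beta_j \eqdef \normin{B_j}_2$) and applying Pythagoras gives the exact identity $\normin{(\Id-B_j)x}^2 = \normin{x}^2 - \beta_j(2-\beta_j)\normin{P_j x}^2$, from which the conclusion is immediate. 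Your route buys two things. First, an exact quantitative contraction identity rather than a lossy triangle-inequality bound; this makes explicit how much the norm drops off the orthogonal complement. Second, you explicitly establish $\beta_j > 0$ (via $\gamma_j\partial_x\prox_{\gamma_j g_j}(z_j^\star) > 0$ on $\cS$ and $M_{jj} > 0$ from the positive definiteness of $M$), a point the paper's proof also needs --- its final strict inequality $\normin{B_j}_2\normin{x_{M_{:,j}^{1/2\perp}}} < \normin{B_j}_2\normin{x}$ collapses if $\normin{B_j}_2 = 0$, and the projector $B_j/\normin{B_j}_2$ is otherwise undefined --- but leaves implicit. Both proofs are short linear algebra; yours is the tighter and more self-contained of the two.
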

\begin{proof}
    \begin{align}
        \Id - B_j
        &=
        \Id - \normin{B_j} \frac{B_j}{\normin{B_j}} \nonumber\\
        &=
        (1 - \normin{B_j}) \Id
        + \normin{B_j}_2 \Id
        - \normin{B_j}_2 \frac{B_j}{\normin{B_j}_2} \nonumber \\
        &=
        (1 - \normin{B_j}) \Id
        + \normin{B_j}
        \underbrace{\left ( \Id - \frac{B_j}{\normin{B_j}_2} \right )}_{\text{projection onto } M_{:,j}^{1/2\perp}} \enspace . \label{eq:almost_proj}
    \end{align}
    Let $x \notin \Span( M_{:,j}^{1/2})^\perp$,
    then there exists $\kappa \neq 0$,
    $x_{ M_{:,j}^{1/2\perp}} \in \Span( M_{:,j}^{1/2})^\perp$  such that
    \begin{align} \label{eq:decomposition_x}
        x = \kappa M_{:,j} + x_{ M_{:,j}^{1/2\perp}} \enspace .
    \end{align}
    Combining \Cref{eq:almost_proj,eq:decomposition_x} leads to:
    \begin{align}
        (\Id - B_j) x
        &=
        (1 - \normin{B_j}_2) x
        + \normin{B_j}_2 x_{ M_{:,j}^{1/2\perp}}
        \nonumber\\
        \normin{(\Id - B_j) x}
        &\leq
        \underbrace{|1 - \normin{B_j}_2|}_{=1 - \normin{B_j}_2} \normin{x}
        + \normin{B_j}_2 \underbrace{\normin{x_{ M_{:,j}^{1/2\perp}}]}}_{< \normin{x}} \nonumber\\
        & < \norm{x} \nonumber \enspace .
    \end{align}
\end{proof}

\begin{lemma}\label{lemma:A_1}
    The spectral norm of $A$ is bounded by 1, \ie $\normin{(\Id - B_{j_s}) \dots (\Id - B_{j_1})}_2 = ||A||_2 < 1$.
\end{lemma}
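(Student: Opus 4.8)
The goal is to show $\norm{A}_2 < 1$ where $A = (\Id - B_{j_s})\cdots(\Id - B_{j_1})$. Each factor $\Id - B_{j}$ has spectral norm at most $1$, so the product is non-expansive and $\norm{A}_2 \leq 1$ follows immediately from submultiplicativity. The strict inequality is the real content. The plan is to argue by contradiction: suppose $\norm{A}_2 = 1$. Since $A$ maps the unit ball into itself and each factor is non-expansive, there must exist a unit vector $x$ that is preserved in norm through \emph{every} factor of the composition, and then use \Cref{lemma:ortho_xj} to force $x$ to be orthogonal to every $M_{:,j}^{1/2}$, which will contradict the restricted injectivity assumption (\Cref{ass:restricted_injectivity}).

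**Key steps.**
First I would record that $\norm{A}_2 \leq \prod_{i}\norm{\Id - B_{j_i}}_2 \leq 1$ using the previous lemma. Next, suppose for contradiction that $\norm{A}_2 = 1$; since the unit sphere is compact and $A$ is continuous, the supremum is attained, so there is a unit vector $x^{(0)}$ with $\norm{A x^{(0)}} = 1$. Define the intermediate iterates $x^{(i)} \eqdef (\Id - B_{j_i})\cdots(\Id - B_{j_1}) x^{(0)}$. Because each factor is non-expansive we have the chain $1 = \norm{x^{(0)}} \geq \norm{x^{(1)}} \geq \cdots \geq \norm{x^{(s)}} = \norm{A x^{(0)}} = 1$, forcing $\norm{x^{(i)}} = \norm{x^{(i-1)}}$ at every step, i.e. each factor preserves the norm of the vector it acts on. Applying \Cref{lemma:ortho_xj} to $x^{(i-1)}$ (which satisfies $\norm{(\Id - B_{j_i}) x^{(i-1)}} = \norm{x^{(i-1)}}$) yields $x^{(i-1)} \in \Span(M_{:,j_i}^{1/2})^\perp$, and moreover $(\Id - B_{j_i})$ acts as the identity on such a vector, so $x^{(i)} = x^{(i-1)}$ for all $i$. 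Hence all the $x^{(i)}$ coincide with $x^{(0)}$, and $x^{(0)} \perp M_{:,j}^{1/2}$ for every $j \in \cS$.

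**Deriving the contradiction.**
At this stage $x^{(0)}$ is a nonzero vector orthogonal to $M_{:,j}^{1/2}$ for all $j \in \cS$, i.e. $M^{1/2} x^{(0)} = 0$ coordinate by coordinate, whence $M^{1/2} x^{(0)} = 0$ and so $M x^{(0)} = 0$. But by \Cref{lemma:M_sym_def_pos} the matrix $M = \nabla^2_{\cS,\cS} f(x^\star) + \diag(u)$ is symmetric definite positive — this is exactly where \Cref{ass:restricted_injectivity} enters, guaranteeing the $\nabla^2_{\cS,\cS} f(x^\star) \succ 0$ part. A positive definite matrix has trivial kernel, contradicting $x^{(0)} \neq 0$. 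Therefore the assumption $\norm{A}_2 = 1$ is untenable and $\norm{A}_2 < 1$.

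**Main obstacle.**
The routine part is the non-expansivity bound; the delicate step is justifying that a norm-preserving vector for the \emph{product} must be norm-preserving at \emph{each} factor. This needs the telescoping inequality above together with the fact, from the decomposition in \Cref{lemma:ortho_xj}, that each $\Id - B_{j_i}$ strictly decreases the norm of any vector with a nonzero component along $M_{:,j_i}^{1/2}$ — so equality at one stage both propagates the unit norm forward and pins the vector into the orthogonal complement. Care is needed that the vectors $x^{(i)}$ stabilize (equal $x^{(0)}$) rather than merely having equal norms, but this follows since $\Id - B_{j_i}$ fixes vectors in $\Span(M_{:,j_i}^{1/2})^\perp$ pointwise, as seen from \eqref{eq:almost_proj}.
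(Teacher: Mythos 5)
Your proof is correct and follows essentially the same route as the paper's: bound each factor's norm by $1$, use \Cref{lemma:ortho_xj} to force any norm-preserving vector into $\bigcap_{j\in\cS}\Span(M_{:,j}^{1/2})^\perp$, and conclude $x=0$ from the full rank of $M^{1/2}$ (\Cref{lemma:M_sym_def_pos}). Your explicit treatment of the intermediate iterates $x^{(i)}$ and their stabilization under each factor is a welcome clarification of a step the paper's proof leaves implicit, but it is the same argument.
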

\begin{proof}
    Let $x \in \bbR^s$ such that $\normin{(\Id - B_{j_s}) \dots (\Id - B_{j_1})x} = \norm{x}$.
    Since
    \begin{align}
        \normin{(\Id - B_{j_s} \dots (\Id - B_{j_1})}_2
        \leq
        \underbrace{\normin{(\Id - B_{j_s})}_2}_{\leq 1}
        \times \dots \times
        \underbrace{\normin{(\Id - B_{j_1})}_2}_{\leq 1}
        \nonumber \enspace ,
    \end{align}
    we thus have for all $j \in \mathcal{S}$,
    \begin{math}
        \normin{(\Id - B_j)x} = \norm{x}
    \end{math}.
    One can thus successively apply \Cref{lemma:ortho_xj} which leads to:
    \begin{align}
        x &\in \bigcap_{j\in \mathcal{S}} \Span{ M_{:,j}^{1/2}}^\perp \Leftrightarrow
        x \in \Span \left ( M_{:,j_1}^{1/2}, \dots, M_{:,j_s}^{1/2} \right )^\perp \nonumber \enspace .
    \end{align}
    Moreover $M^{1/2}$ has full rank (see \Cref{lemma:M_sym_def_pos}), thus $x=0$ and
    \begin{align}
        \normin{(\Id - B_{j_s}) \dots (\Id - B_{j_1})}_2 < 1 \nonumber \enspace .
    \end{align}
\end{proof}

From \Cref{lemma:A_1}, $||A||_2<1$. Moreover $A$ and
$\cJ\tilde{\psi}(x^\star_{\mathcal{S}})$ are similar matrices, then
$\rho(\cJ\tilde{\psi}(x^\star_{\mathcal{S}})) = \rho(A) \leq ||A||_2 < 1$.

To summarize, $x^\star_\mathcal{S}$ is the solution of a fixed point equation $\tilde{\psi}(x^\star_\mathcal{S}, x^\star_{\mathcal{S}^c}) = x^\star_\mathcal{S}$.
From \Cref{lemma:diff_prox}, $\tilde{\psi}(., x^\star_{\mathcal{S}^c})$ is differentiable at $x^\star_\mathcal{S}$ and the Jacobian at $x^{\star}_\mathcal{S}$ satifies the condition $\rho(\cJ\tilde{\psi}(x^\star_\mathcal{S}))<1$.
Then all conditions are met to apply \cite{Polyak1987}[Theorem 1, Section 2.1.2] which proves local linear convergence.

\end{document}